\documentclass{article}

\usepackage{microtype}
\usepackage{graphicx}
\usepackage{subcaption}
\usepackage{booktabs} 

\usepackage{hyperref}

\usepackage[preprint]{icml2026}

\usepackage{amsmath}
\usepackage{amssymb}
\usepackage{mathtools}
\usepackage{amsthm}

\usepackage{graphicx}
\usepackage{algorithm}
\usepackage{algorithmic}
\usepackage{amsfonts}
\usepackage{multirow}
\usepackage{makecell}
\usepackage{caption}
\usepackage{enumitem}

\usepackage[capitalize,noabbrev]{cleveref}

\theoremstyle{plain}
\newtheorem{theorem}{Theorem}[section]
\newtheorem{proposition}[theorem]{Proposition}
\newtheorem{lemma}[theorem]{Lemma}

\theoremstyle{definition}

\theoremstyle{remark}

\usepackage[textsize=tiny]{todonotes}

\icmltitlerunning{Submission and Formatting Instructions for ICML 2026}

\begin{document}

\twocolumn[
  \icmltitle{BiSpikCLM: A Spiking Language Model integrating \\ Softmax-Free Spiking Attention and Spike-Aware Alignment Distillation
    }



  \icmlsetsymbol{equal}{*}

    \begin{icmlauthorlist}
        \icmlauthor{Sihang Guo}{hit}
        \icmlauthor{Chenlin Zhou}{pku,pcl}
        \icmlauthor{Jiaqi Wang}{hit,pcl}
        \icmlauthor{Kehai Chen}{hit}
        \icmlauthor{Qingyan Meng}{pcl}
        \icmlauthor{Zhengyu Ma}{pcl}
    \end{icmlauthorlist}
    
    \icmlaffiliation{pku}{School of Electronic and Computer Engineering, Shenzhen Graduate School, Peking University, Shenzhen, China}
    \icmlaffiliation{pcl}{Peng Cheng Laboratory, Shenzhen, China}
    \icmlaffiliation{hit}{Harbin Institute of Technology, Shenzhen, China}
    
    \icmlcorrespondingauthor{Zhengyu Ma}{mazhy@pcl.ac.cn}

  \icmlkeywords{Machine Learning, ICML}

  \vskip 0.3in
]



\printAffiliationsAndNotice{}  

\begin{abstract}
Spiking Neural Networks (SNNs) offer promising energy-efficient alternatives to large language models (LLMs) due to their event-driven nature and ultra-low power consumption. However, to preserve capacity, most existing spiking LLMs still incur intensive floating-point matrix multiplication (MatMul) and nonlinearities, or training difficulties arising from the complex spatiotemporal dynamics. To address these challenges, we propose BiSpikCLM, the first fully binary spiking MatMul-free causal language model. BiSpikCLM introduces Softmax-Free Spiking Attention (SFSA), eliminating softmax and floating-point operations in autoregressive language modeling. For efficient training, we introduce Spike-Aware Alignment Distillation (SpAD), which aligns ANN teacher and SNN student across embeddings, attention maps, intermediate features, and output logits. SpAD framework allows BiSpikCLM to reach comparable performance to ANN counterparts using substantially fewer training tokens (e.g., only 5.6\% of the tokens for the 1.3B model). As a result, BiSpikCLM achieves competitive performance at only 4.16\%–5.87\% of the computational cost on natural language generation tasks. Our results highlight the feasibility and effectiveness of fully binary spike-driven LLMs and establish the distillation as a promising pathway for brain-inspired spiking NLP.
\end{abstract}

\begin{figure*}[t]
\centering
\includegraphics[width=0.95\linewidth]{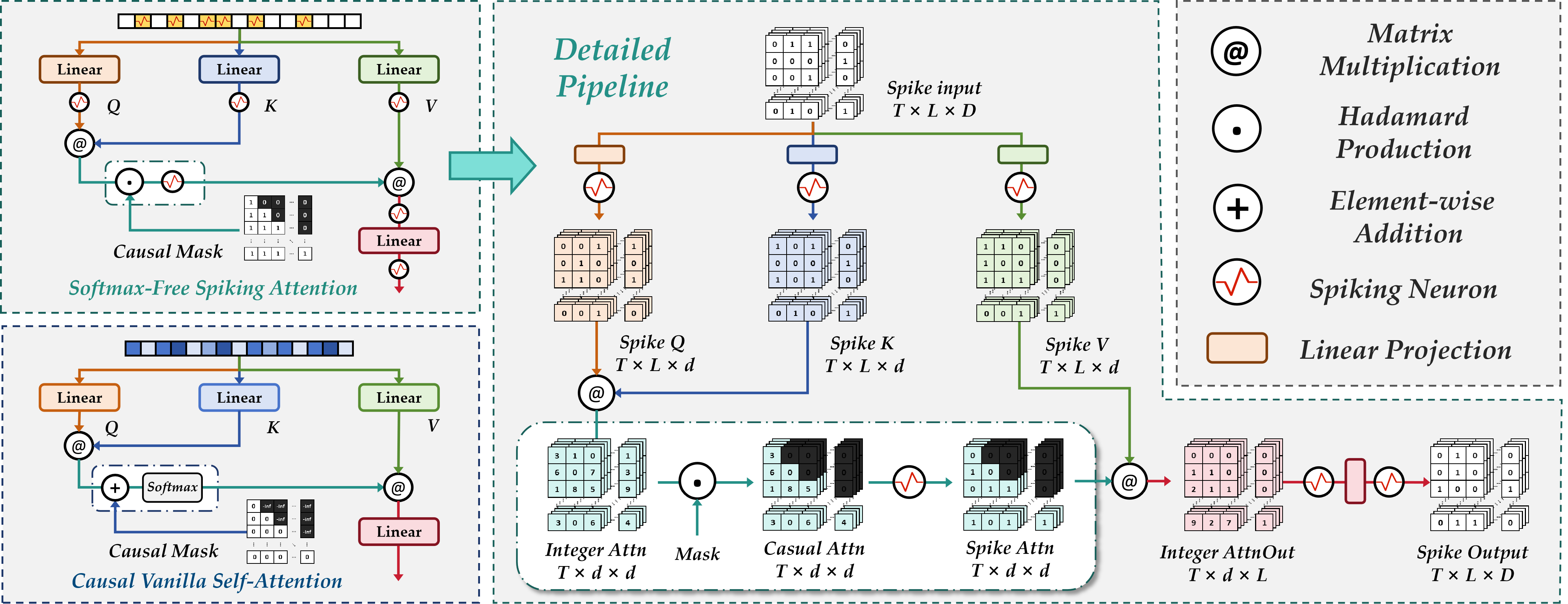}
\caption{Overview of \textbf{Softmax-Free Spiking Attention (SFSA)}. 
Left: Comparison between Vanilla Causal Self-Attention (CSA) (bottom) and SFSA (top). CSA uses softmax and additive masks, while SFSA employs spike-based activation and binary causal masking. Right: Detailed SFSA pipeline, showing spike-form Q, K, V computation, masked integer attention, spiking activation, and spike-based output, enabling fully discrete and energy-efficient attention modeling.}
\label{fig:sfsa}
\end{figure*}

\section{Introduction}

Large Language Models (LLMs) have demonstrated remarkable capabilities in natural language processing, powering a wide range of applications from conversational agents to code generation~\citep{brown2020language,achiam2023gpt}. However, these models typically require extensive computational resources and energy consumption during both training and inference. For example, GPT-3 was trained with 175 billion parameters using hundreds of petaflop/s-days of compute~\citep{brown2020language}. In addition, inference also incurs substantial energy costs, as serving a single query can involve billions of operations and significant GPU utilization~\citep{strubell2020energy,schwartz2020green}, raising concerns about their scalability and environmental impact~\citep{strubell2020energy}.

Unlike ANN-based LLMs, the human brain achieves superior intelligence with far lower energy consumption, operating on just 20 watts to power approximately 86 billion neurons~\citep{izhikevich2003simple,gerstner2014neuronal}. Inspired by this energy-efficiency, Spiking Neural Networks (SNNs)~\citep{maass1997networks,gerstner2014neuronal} communicate through binary spike events, enabling event-driven and low-power computation~\citep{yin2021accurate,schuman2022opportunities}, particularly on neuromorphic hardware, making SNNs a promising alternative to traditional ANNs. 

While recent efforts have shown promising results of SNNs in computer vision tasks~\citep{zhou2024qkformer,li2024spikeformer,luo2024integer}, extending SNNs to natural language processing (NLP), especially LLMs, remains largely underexplored. A central challenge is the design of spiking attention mechanisms. In contrast to vision models, where representations are often bidirectional and spatially local, autoregressive LLMs require \textbf{causal attention} to ensure that each token prediction depends only on its preceding context. However, conventional causal attention relies on floating-point matrix multiplications and the softmax operation, both of which are computationally intensive and fundamentally incompatible with spike-based processing. Existing attempts either retain these components~\citep{zhu2023spikegpt} or introduce multi-threshold neurons and integer activations~\citep{xing2024spikellm}, which still incur substantial floating-point overhead. Designing a spike-driven causal attention mechanism is therefore critical: it must eliminate softmax while preserving the autoregressive representational capacity of binary spike trains. This challenge directly motivates our \textbf{Softmax-Free Spiking Attention (SFSA)}, which enables spike-based language modeling for spiking LLMs.

Moreover, training spiking LLMs introduces additional difficulties beyond those in vision tasks. The inherent temporal dynamics of SNNs already leads to complex computational graphs and high computational cost during backpropagation. Scaling up the architecture further exacerbates this, making full end-to-end training inefficient or even infeasible. Consequently, prior works mostly resort to ANN-to-SNN conversions~\citep{xing2024spikellm,schmidgall2024brain}. However, such methods typically require large time steps to approximate ANN activations, resulting in high inference cost. Integer-based conversions further scale the operations by $T \times N$, which compromises the potential energy benefits of event-driven spiking computation.

To address these challenges, we propose BiSpikCLM, a binary spiking causal language model built on two key components: a spike-driven attention mechanism (SFSA), schematically depicted in Figure~\ref{fig:sfsa}, and a spike-aware knowledge distillation scheme (SpAD), presented in Figure~\ref{fig:distill}. Overall, our contributions can be summarized as follows:

\begin{itemize}[leftmargin=*]
\item We propose \textbf{BiSpikCLM}, the first binary spiking MatMul-free causal language model equipped with a fully spike-driven attention mechanism. Our \textbf{SFSA} replaces the causal self-attention, which relies on floating-point operations and softmax, enabling efficient autoregressive language modeling with binary spikes. The overall design follows the OPT-family architecture~\citep{zhang2022opt}, adapted to the spiking domain.

\item We introduce \textbf{SpAD}, a novel training framework that enables BiSpikCLM to be directly trained from random initialization. SpAD distills hierarchical knowledge covering embeddings, attention maps, intermediate features, and output logits from the teacher model, thereby accelerating convergence and reducing the amount of training data required for large-scale spiking LLMs.

\item With only 10B training tokens, significantly fewer than the 180B tokens used to train OPT-1.3B, our SpAD framework enables BiSpikCLM-1.3B to achieve 42.19\% zero-shot accuracy on common reasoning benchmarks using 4 time steps, approaching the 49.73\% of OPT-1.3B, while consuming just 10.6\% of the energy per inference. Remarkably, even at 2 time steps, the model maintains 41.33\% accuracy with only 5.88\% of the energy cost.
\end{itemize}

\begin{table*}[t]
\caption{
Comparison of spiking language models.
The properties “Softmax-Free”, “Exp-Free (exponential-function-free)”, and “FP-Mul-Free (floating-point multiplication-free)” are assessed at the attention-module level.
}
\centering
\scriptsize
\setlength{\tabcolsep}{5pt}
\renewcommand{\arraystretch}{1.1}
\begin{tabular}{lcccccccc}
\hline
\textbf{Model} &
\textbf{Params} &
\textbf{Spike Form} &
\textbf{Training Method} &
\textbf{Architecture} &
\textbf{Softmax-Free} &
\textbf{Exp-Free} &
\textbf{FP-Mul-Free} \\
\hline
SpikeBERT~\tiny{\citep{lv2023spikebert}}
& 109M
& Binary
& Knowledge distillation
& Encoder-only
& \checkmark
& \checkmark
& \(\times\) \\

SpikingBERT~\tiny{\citep{bal2024spikingbert}}
& 50M
& Binary
& Knowledge distillation
& Encoder-only
& \(\times\)
& \(\times\)
& \(\times\) \\

SpikeLM (BERT)~\tiny{\citep{xing2024spikelm}}
& 110M
& Ternary
& Train from scratch
& Encoder-only
& \(\times\)
& \(\times\)
& \(\times\) \\

SpikeLM (BART)~\tiny{\citep{xing2024spikelm}}
& 139M
& Ternary
& Train from scratch
& Encoder--Decoder
& \(\times\)
& \(\times\)
& \(\times\) \\

SpikeLM (mBART)~\tiny{\citep{xing2024spikelm}}
& 680M
& Ternary
& Train from scratch
& Encoder--Decoder
& \(\times\)
& \(\times\)
& \(\times\) \\

SpikeGPT~\tiny{\citep{zhu2023spikegpt}}
& 216M
& Binary
& Train from scratch
& Decoder-only
& \checkmark
& \(\times\)
& \(\times\) \\

SpikeLLM~\tiny{\citep{xing2024spikellm}}
& 7-70B
& Integer
& Post-training quantization
& Decoder-only
& \(\times\)
& \(\times\)
& \(\times\) \\

BiSpikCLM (ours)
& 1.3B
& Binary
& SpAD
& Decoder-only
& \checkmark
& \checkmark
& \checkmark \\

\hline
\end{tabular}
\label{tab:spiking_lm_attention_comparison}
\end{table*}

\section{Related Work}

\subsection{SNNs in Downstream Tasks}
Recent works show SNNs achieving competitive performance in vision tasks with lower computational consumption. In image classification, advances in surrogate gradients, attention have boosted accuracy and efficiency on CIFAR-10/100 and ImageNet~\citep{rathi2020enabling,zhou2022spikformer,zhou2023spikingformer,zhou2024qkformer,li2024spikeformer}. In object detection, models like SFOD and SpikeYOLO reduce energy cost while closing the gap with ANNs~\citep{su2023deep,bodden2024spiking,fan2024sfod,luo2024integer}. For event-based vision, architectures such as MG-SNN effectively handle gesture, motion, and optical flow tasks~\citep{lee2020spike,gehrig2021dsec,qiu2025efficient,zheng2025motion}.

In contrast, the application of SNNs in natural language processing (NLP) is still largely underexplored, with only a few attempts adapting language models to spike-based computation. For example, ~\citet{lv2023spikebert} employs a two-stage distillation pipeline to train an encoder-only spiking student from a pre-trained BERT teacher; similarly, ~\citet{bal2024spikingbert} distills a compact 4-block spiking encoder via two-stage KD. However, both still retain floating-point operations and are evaluated at modest scales. ~\citet{xing2024spikelm} proposes a spike-driven language model with bi-directional encoding, yet relies on floating-point spikes and retains softmax operations, undermining event-driven efficiency. ~\citet{zhu2023spikegpt} replaces attention with a linear-complexity Spiking RWKV module, but still depends on dense floating-point computation and remains modest in size (216M parameters). ~\citet{xing2024spikellm} scales to 7B--70B with the GIF neuron and OBSpiking framework, but substitutes binary spikes with quantized integer signals and retains softmax operation, losing event-driven benefits and fine-grained temporal dynamics; moreover, its attention does not incorporate causal masking adapted to SNN timing constraints. These prior approaches differ substantially in spike representation, training strategy, and especially attention design. A detailed comparison at the attention-module level is provided in Table~\ref{tab:spiking_lm_attention_comparison}.

\subsection{Knowledge Distillation}
Knowledge distillation is a widely adopted approach for compressing large-scale language models into smaller, more efficient ones, as demonstrated by models like DistilBERT and TinyBERT~\citep{sanh2019distilbert,jiao2019tinybert}. In the context of SNNs, early distillation efforts have primarily targeted small-scale vision tasks, using spike-based student networks guided by soft targets from ANN teachers~\citep{xu2023constructing,qiu2024self,xu2024reversing}.

In contrast, spike-based distillation for language modeling remains underexplored. Existing methods often overlook the temporal dynamics of SNNs or lack alignment in the spiking domain. For example, SpikeBERT~\citep{lv2023spikebert} maps spike activations into continuous representations via an additional MLP for teacher-student alignment. However, this introduces extra trainable parameters and computational overhead, while bypassing the native spike representation, thus limiting the preservation of spike-driven semantics. To address this, we propose the Spike-Aware Alignment Distillation framework tailored for spiking LLMs, featuring spike-attention and spike-feature alignment modules that enable hierarchical knowledge transfer while preserving the discrete and temporal nature of spiking computation.

\begin{figure*}[t]
\centering
\includegraphics[width=0.95\linewidth]{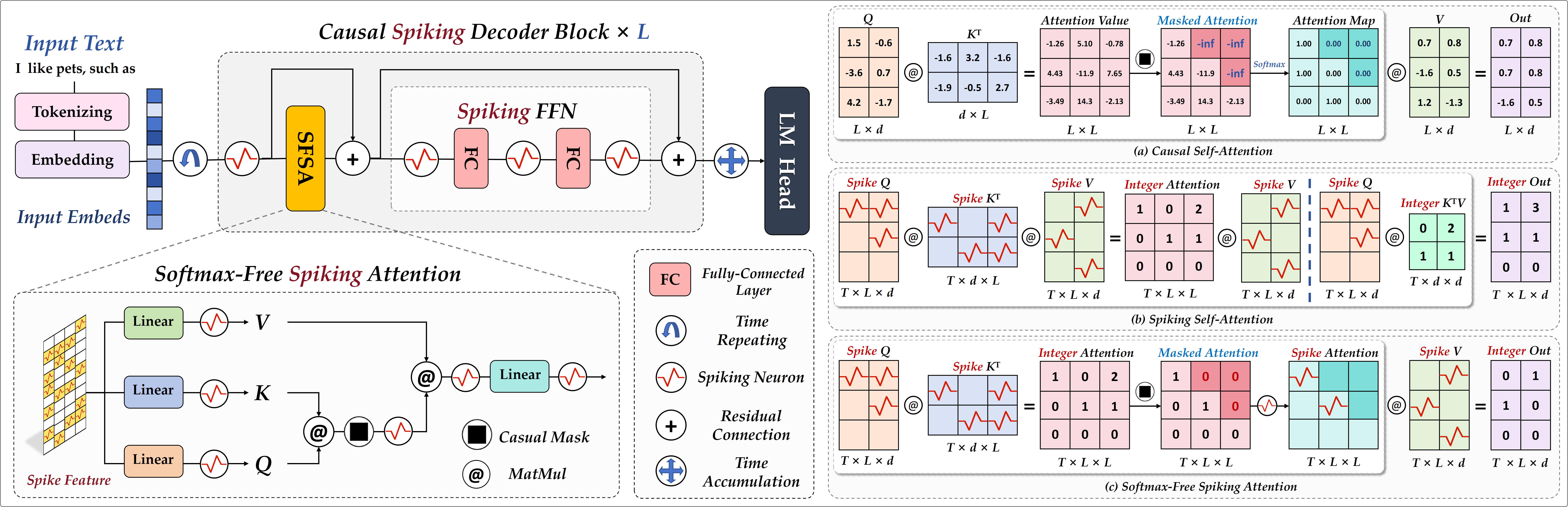}
\caption{Left depicts the BiSpikCLM framework, detailing the operations of the Softmax-Free Spiking Attention (SFSA) module and the Spiking Feed-Forward Network (SFFN) module.
Right compares the computational process of vanilla Causal Self-Attention (CSA), Spiking Self-Attention (SSA), and SFSA, where red spikes represent binary values of 1 and all other values are 0.}
\label{fig:model}
\end{figure*}

\section{Methods}
\label{method}
We propose BiSpikCLM, a binary spike-based causal language model that integrates a spike-native architectural design with an efficient training paradigm tailored for large-scale SNNs. Specifically, we design a fully spike-driven attention mechanism, Softmax-Free Spiking Attention (SFSA), which replaces conventional softmax-based attention with spike-compatible computation, supporting autoregressive language modeling using binary spikes. Building upon this architecture, we further develop Spike-Aware Alignment Distillation (SpAD), a hierarchical distillation framework that enables stable and scalable training from random initialization by transferring rich supervision signals from a frozen ANN teacher to the SNN student. The overall model architecture is shown in Figure~\ref{fig:model}, and the training strategy is illustrated in Figure~\ref{fig:distill}.

\subsection{Problem Statement}
We consider the task of autoregressive generation using a decoder-only causal language model (CLM). Formally, given a sequence of tokens \( x_1,x_2, \dots, x_n \), the model is trained to predict the next token \( x_{n+1}\) conditioned on the previous \( n \) tokens. This can be expressed as maximizing the likelihood \( P(x_{n+1} \mid x_1, x_2, \dots, x_n) \).

During the pre-training stage, the ground-truth label for each autoregressive generation step \( \tau \) is the token \( x_{\tau+1} \), and the model is optimized using the standard cross-entropy loss. The goal is to learn a function that maps token sequences to probability distributions over the vocabulary, employing causal (unidirectional) attention under temporal constraints.

\subsection{BiSpikCLM Architecture}
To enable efficient language modeling with SNNs, we propose \textbf{BiSpikCLM}, which integrates binary spiking neurons with causal attention for softmax-free, energy-efficient computation. Unlike prior works~\citep{zhu2023spikegpt,xing2024spikelm,schmidgall2024brain,xing2024spikellm}, BiSpikCLM is fully spike-driven and employs a Hadamard-masked dot product followed by spiking neuron to implement causal attention without softmax. The architecture consists of three main components: (1) Spiking Neuron Modules, (2) Softmax-Free Spiking Attention (SFSA), and (3) Spiking Feed-Forward Network (SFFN). The overall design is built upon the OPT-family architecture~\citep{zhang2022opt}, chosen for its open-source nature, simplicity, and proven effectiveness.

\subsubsection{Spiking Neuron Modules}
We primarily instantiate our spike-driven causal language model with the standard Leaky Integrate-and-Fire (LIF) neuron~\citep{wu2018spatio}, which strikes a favorable balance between computational efficiency and modeling capacity. This design results in our core model, \textbf{BiSpikCLM}, where neurons emit binary spikes in $\{0,1\}$. The LIF implementation follows SpikingJelly~\citep{fang2023spikingjelly}, a widely adopted framework for efficient spiking neural networks.

Concretely, the LIF neuron generates a spike \( S_t \in \{0,1\} \) when the membrane potential \( U_t \) exceeds a firing threshold \( U_{\mathrm{thr}} \), followed by a reset:
\begin{equation}
\begin{aligned}
S_t &=
\begin{cases}
1, & \text{if } U_t \geq U_{\mathrm{thr}}, \\
0, & \text{otherwise},
\end{cases} \\
U_t &= I_t + \beta U_{t-1} - S_{t-1} U_{\mathrm{thr}},
\end{aligned}
\label{eq:lif_spike}
\end{equation}
where \( I_t = W X_t \) denotes the input current and \( \beta \) controls the temporal decay of the membrane potential. This mechanism induces sparse, event-driven computation and avoids dense activation patterns in ANN counterparts.

Importantly, our architecture is neuron-agnostic and can be readily extended to different spiking formulations. To demonstrate this flexibility, we further introduce \textbf{TriSpikCLM}, which replaces the binary LIF neuron with a ternary spiking neuron inspired by~\citep{xing2024spikelm}. In this variant, spike outputs are extended to ternary values \( \{-\alpha, 0, +\alpha\} \), allowing neurons to encode both the sign and magnitude of membrane potential deviations.

Specifically, the ternary neuron produces:
\begin{equation}
S_t =
\begin{cases}
-\alpha, & \text{if } U_t < -\alpha, \\
0,       & \text{if } |U_t| \leq \alpha, \\
+\alpha, & \text{if } U_t > +\alpha,
\end{cases}
\label{eq:ternary_spike}
\end{equation}
with the membrane potential updated as:
\begin{equation}
U_t = U_t(\alpha - S_t) + U_{\mathrm{reset}} S_t .
\label{eq:amplitude_update}
\end{equation}

While TriSpikCLM provides richer signal representation by relaxing binary constraints, it incurs additional computational overhead and weakens the strict sparsity and event-driven efficiency characteristic of binary SNNs. We therefore include TriSpikCLM mainly as a comparative extension to assess the trade-off between representational expressiveness and efficiency.

\begin{figure*}[t]
\centering
\includegraphics[width=0.95\linewidth]{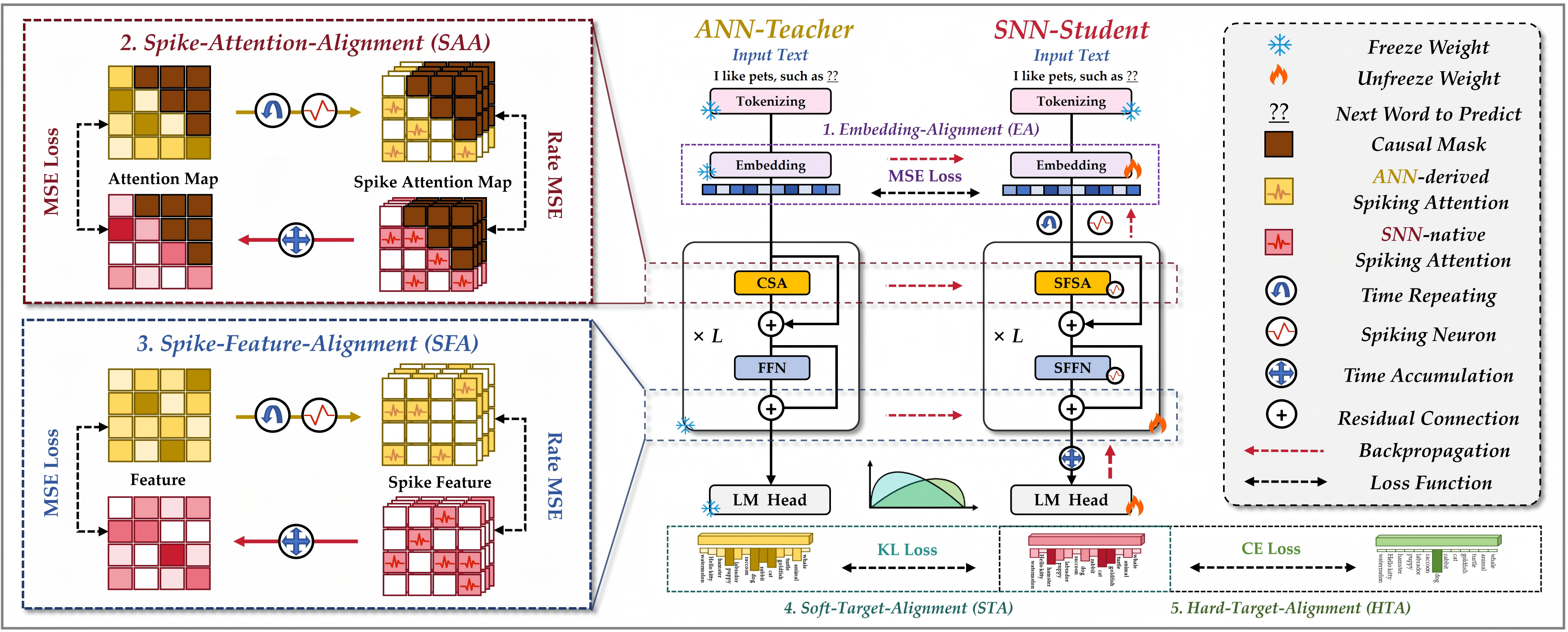}
\caption{
Overview of our \textbf{Spike-Aware Alignment Distillation (SpAD)} framework. 
Knowledge is transferred from a frozen ANN teacher to a trainable SNN student via five alignment modules: 
(1) \textbf{Embedding Alignment} (EA); 
(2) \textbf{Spike-Attention Alignment} (SAA); 
(3) \textbf{Spike-Feature Alignment} (SFA); 
(4) \textbf{Soft-Target Alignment} (STA); and 
(5) \textbf{Hard-Target Alignment} (HTA). 
Losses include MSE, CE, and spike-aware temporal strategies. 
In particular, our proposed \textit{Rate-MSE} loss in Eq.~\eqref{eq:rate_mse} aligns the attention dynamics between ANN and SNN models over time. Dashed arrows indicate loss paths; spike-related operations are denoted with icons.}
\label{fig:distill}
\end{figure*}

\subsubsection{Softmax-Free Spiking Attention (SFSA)}

To enable attention mechanisms in spike-based neural networks while preserving computational efficiency, we propose the \textbf{Softmax-Free Spiking Attention (SFSA)} module, presented in Figure~\ref{fig:sfsa}. SFSA reformulates the classical self-attention mechanism using spike-based representations, constrained by causality and spiking dynamics. 

Specifically, input spike sequences are projected to queries, keys, and values, which are then discretized via spiking neurons. Spike-based dot products between queries and keys yield integer-valued attention scores, followed by a causal mask to ensure autoregressive flow. The masked scores are passed through a spiking neuron to produce sparse attention weights, which are used to compute the weighted sum over value spikes. A final projection and spiking activation generate the output. Details are provided in Appendix~\ref{appendix:sfsa}.

\subsubsection{Spiking Feed-Forward Network (SFFN)}

The \textbf{Spiking Feed-Forward Network (SFFN)} replaces nonlinear activations in the standard Transformer FFN with temporal spiking neurons:
\begin{equation}
\mathcal{FC}(x) = \text{SN}\big(\mathcal{W} x + b \big),
\end{equation}
\begin{equation}
\mathrm{SFFN}(x) = \mathcal{FC}_2 \big( \mathcal{FC}_1(x) \big),
\end{equation}
where \text{SN} denotes a spiking neuron. While binary SFFN preserves strict event-driven sparsity, the ternary extension enables richer signed encoding via multi-level discrete states.

\subsection{Spike-Aware Alignment Distillation}
To enable effective knowledge transfer from an ANN teacher to an SNN student, we propose \textbf{Spike-Aware Alignment Distillation (SpAD)}, consisting of five key components targeting different levels (Figure~\ref{fig:distill}). To address structural mismatches between teacher and student (e.g., embedding size, depth, or number of attention heads), we employ lightweight structural alignment techniques such as linear projection, head-wise mapping, and layer skipping.

Among the proposed alignments, we focus on \textbf{Spike-Attention Alignment} and \textbf{Spike-Feature Alignment} to bridge the core ANN–SNN gap: continuous-valued representations versus discrete, temporally extended spike activities. Detailed formulations are provided in Appendix~\ref{appendix:SpAD}.

\paragraph{Spike-Attention Alignment}
Prior work~\cite{ji2021show, wang2021knowledge, gou2023hierarchical} on knowledge distillation has consistently shown that aligning intermediate representations, especially attention maps and hidden features, is critical for transferring structural and relational knowledge, beyond matching output distributions alone.

Given the difference in attention mechanisms, floating-point representations in the ANN versus spike-based representations in the SNN, as well as the additional temporal dimension in the SNN, which causes dimensional mismatch, we design two alignment strategies to enable effective cross-domain knowledge transfer:

\textbf{(a) Temporal Replication and Spike Encoding:} 
We replicate the attention map $A_{\text{ANN}}\in\mathbb{R}^{L\times L}$ across $T$ time steps:
\begin{equation}
\tilde{A}_{\text{ANN}} = \text{Repeat}(A_{\text{ANN}}, T) \in \mathbb{R}^{T \times L \times L}.
\end{equation}

We then convert $\tilde{A}_{\text{ANN}}$ into a spike-form attention using the same spiking neuron module as the SNN student, denoted by $\sigma_{\text{spike}}$ and applied entry-wise. 
Concretely, for each entry $(i,j)$ we treat $a_{ij}=A_{\text{ANN}}(i,j)$ as a constant input current and run a spiking neuron over $T$ time steps:

\begin{equation}
\begin{aligned}
U_t^{(i,j)} &= a_{ij} + \beta U_{t-1}^{(i,j)} - S_{t-1}^{(i,j)} U_{\mathrm{thr}}, \\
S_t^{(i,j)} &=
\begin{cases}
1, & \text{if } U_t^{(i,j)} \ge U_{\mathrm{thr}}, \\
0, & \text{otherwise}.
\end{cases}
\end{aligned}
\label{eq:sigma_spike_def}
\end{equation}

where $\beta\in(0,1)$ is the leak factor and $U_{\mathrm{thr}}$ is the firing threshold.

We define $\hat{A}_{\text{spike}}^{\text{ANN}}(t)_{ij} = S_t^{(i,j)}$, i.e.,

\begin{equation}
\hat{A}_{\text{spike}}^{\text{ANN}}(t)
= \sigma_{\text{spike}}\!\left(\tilde{A}_{\text{ANN}}(t)\right),
\quad t=1,\ldots,T.
\end{equation}

We compute Rate-MSE loss:
\begin{equation}
\mathcal{L}_{\text{attn}}^{\text{RateMSE}} = \text{MSE}\left(\frac{1}{T} \sum_{t=1}^{T}
 \hat{A}_{\text{spike}}^{\text{ANN}}(t), \frac{1}{T} \sum_{t=1}^{T}
 A_{\text{SNN}}(t)\right).
\label{eq:rate_mse}
\end{equation}

\textbf{(b) Temporal Fusion and Distribution Matching:} Alternatively, we temporally average the SNN attention and match it to the ANN attention using Mean Squared Error (MSE):
\begin{equation}
\bar{A}_{\text{SNN}} = \frac{1}{T} \sum_{t=1}^{T} A_{\text{SNN}}(t)
\label{eq:snn_avg}
\end{equation}

\begin{equation}
\mathcal{L}_{\text{attn}}^{\text{MSE}} = \text{MSE}(A_{\text{ANN}}, \bar{A}_{\text{SNN}})
\label{eq:attn_mse}
\end{equation}

The overall spike-attention alignment loss is:
\begin{equation}
\mathcal{L}_{\text{attn}} = \gamma_{\text{attn}} \mathcal{L}^{\text{RateMSE}}_{\text{attn}} + (1-\gamma_{\text{attn}})\, \mathcal{L}^{\text{MSE}}_{\text{attn}},
\end{equation}
where $\gamma_{\text{attn}} \in (0,1)$ balances the contribution of rate-based and temporally averaged alignment about attention scores.

\paragraph{Rationale of method (a)}
Temporal Replication and Spike Encoding are motivated by the rate-coding principle in biological and artificial spiking neurons. When a spiking neuron is driven by a constant input $a$, its long-term firing rate converges to a stable value that is a bounded and non-decreasing function of $a$. 

As a result, replicating $A_{\mathrm{ANN}}$ over time and passing it through the spiking neuron module $\sigma_{\text{spike}}$ yields a spiking domain proxy, i.e., a spike-based representation whose time-averaged activity approximately preserves the relative ordering and magnitude of the original attention values.

Formally, let
\begin{equation}
\hat r_T(a) = \frac{1}{T}\sum_{t=1}^{T} S_t
\end{equation}
denote the empirical firing rate induced by a constant input current $a$.
Under the discrete-time LIF dynamics defined in Eq.~\eqref{eq:lif_spike}, when driven by a constant input $I_t = a$ for all $t$, 
the membrane potential evolves periodically once spiking occurs.
As $T \to \infty$, the empirical rate $\hat r_T(a)$ converges to a stationary firing rate $r(a)$.

Moreover, there exist constants $0 \le r_{\min} \le r_{\max} \le 1$ such that
\begin{equation}
r_{\min} \le r(a) \le r_{\max},
\end{equation}
and for any $a_1 \le a_2$,
\begin{equation}
r(a_1) \le r(a_2),
\end{equation}
i.e., $r(a)$ is a bounded and monotone non-decreasing function of the input drive $a$.

While the above analysis considers the asymptotic regime $T \to \infty$, in practice we use a small number of time steps (e.g., $T=2$ or $4$). Importantly, Rate-MSE only requires an order-preserving correspondence between input drives and spike activities, where larger inputs yield higher spike probabilities even with few time steps. And stochastic averaging across tokens, heads, and iterations further mitigates variance.

Therefore, the mapping $a \mapsto \hat r_T(a)$ provides an approximately order-preserving spiking domain proxy of ANN attention signals, and minimizing Rate-MSE in Eq.~\eqref{eq:rate_mse} encourages the student SNN to match the teacher’s attention structure. A detailed dynamical analysis is provided in Appendix~\ref{appendix:SpAD}.

\paragraph{Rationale of method (b)}
Method (b) complements method (a) by aligning the \emph{overall} attention structure. While method (a) enforces temporal compatibility, method (b) matches the student’s cumulative attention to the teacher’s static map. We compute an effective SNN attention by averaging over time (Eq.~\eqref{eq:snn_avg}), turning binary spikes into a rate-coded representation, where each entry reflects its relative importance over the simulation window. This form preserves fine-grained token-to-token differences while smoothing spike noise. Minimizing the MSE between the rate-coded attention and \(A_{\text{ANN}}\) encourages the student to attend to similar semantic cues as the teacher.

\paragraph{Spike-Feature Alignment}

Similarly, intermediate hidden states are aligned using a combination of rate-based and temporally averaged MSE:
\begin{equation}
\mathcal{L}_{\text{feat}} = \gamma_{\text{feat}} \mathcal{L}^{\text{RateMSE}}_{\text{feat}} + (1-\gamma_{\text{feat}})\, \mathcal{L}^{\text{MSE}}_{\text{feat}}, 
\end{equation}
where $\gamma_{\text{feat}} \in (0,1)$ balances the contribution of rate-based and temporally averaged alignment about intermediate features.

\paragraph{Total Loss}

Combining embedding alignment, spike-based alignments, and traditional distillation, the student SNN is supervised with the overall training objective:
\begin{equation}
\label{Loss}
\mathcal{L}_{\text{total}} = \lambda_1 \mathcal{L}_{\text{emb}} + \lambda_2 \mathcal{L}_{\text{attn}} + \lambda_3 \mathcal{L}_{\text{feat}} + \lambda_4 \mathcal{L}_{\text{soft}} + \lambda_5 \mathcal{L}_{\text{hard}}.
\end{equation}
The coefficients $\{\lambda_i\}_{i=1}^{5}$ are listed in Table~\ref{tab:training_hyperparams} and satisfy $\sum_{i=1}^{5}\lambda_i=1$ for a stable loss scale; we set them based on preliminary sweeps, with larger weights on distillation terms improving convergence and accuracy.

\begin{table*}[h]
\caption{Comparison of performance and estimated energy efficiency between BiSpikCLM, TriSpikCLM and conventional ANN baselines on the ACC benchmark. 
BiSpikCLM adopts classic LIF neurons (see Eq.~\eqref{eq:lif_spike}) implemented via SpikingJelly~\citep{fang2023spikingjelly}, while TriSpikCLM employs ternary-valued spiking neurons with amplitude encoding (see Eq.~\eqref{eq:ternary_spike}). 
All energy estimates are calculated under a uniform FP32-based energy model for fair comparison. Time Step indicates the number of discrete simulation steps during SNN inference.}
\label{tab:results}
\centering
\footnotesize
\setlength{\tabcolsep}{2.5pt} 
\renewcommand{\arraystretch}{1.05} 
\begin{tabular}{l | c c c c c c c | *{9}{c}}
\toprule
{\multirow{2}{*}{\textbf{Model}}} & 
{\multirow{2}{*}{\textbf{\makecell{Params \\ (B)}}}} & 
{\multirow{2}{*}{\textbf{\makecell{Tokens \\ (B)}}}} & 
{\multirow{2}{*}{\textbf{\makecell{Spike \\ Form}}}} & 
{\multirow{2}{*}{\textbf{\makecell{Time \\ Step}}}} & 
{\multirow{2}{*}{\textbf{\makecell{OPs \\ (G)}}}} & 
{\multirow{2}{*}{\textbf{\makecell{Firing \\ Rate}}}} & 
{\multirow{2}{*}{\textbf{\makecell{Energy \\ (mJ)}}}} & 
\multicolumn{9}{c}{\textbf{Zero - shot Accuracy (\%) $\uparrow$}} \\
& & & & & & & & {\scriptsize \textbf{ARC-e}} & {\scriptsize \textbf{ARC-c}} & {\scriptsize \textbf{WG}} & {\scriptsize \textbf{BQ}} & {\scriptsize \textbf{PIQA}} & {\scriptsize \textbf{HS}} & {\scriptsize \textbf{OBQA}} & {\scriptsize \textbf{HQA}} & {\scriptsize \textbf{Avg.}} \\
\midrule
OPT                  & 0.125 & 180 & $\times$ & — & 125.6  & —     & 125.95 & 43.6 & 19.3 & 52.3 & 54.6 & 62.4 & 32.1 & 20.2 & 23.7 & 38.60 \\
Pythia               & 0.160 & 300 & $\times$ & — & 125.7  & —     & 126.01 & 43.7 & 19.8 & 52.8 & 55.1 & 62.7 & 33.6 & 20.1 & 24.2 & 39.00 \\
SpikeGPT        & 0.046   & 16.5 & Binary & — & 3.66   & 0.174 & 3.29 & 32.3 & 16.2 & 50.2 & 45.7 & 54.6 & 25.3 & 15.7 & 20.6 & 32.58 \\
SpikeGPT        & 0.216   & 16.5 & Binary & — & 18.3   & 0.168 & 16.53 & 35.2 & 17.7 & 50.7 & 47.3 & 55.1 & 27.6 & 17.3 & 23.1 & 34.25 \\

BiSpikCLM  & 0.125 & \textbf{1.0} & Binary & 2 & 12.1   & 0.196 & \textbf{5.24}   & 39.1 & 18.9 & 50.3 & 52.7 & 56.7 & 28.1 & 19.8 & 22.9 & 36.05 \\
TriSpikCLM  & 0.125 & \textbf{1.0} & Ternary & 2 & 13.7   & 0.412 & \textbf{10.74}  & 38.5 & 18.3 & 51.3 & 52.3 & 57.7 & 29.1 & 19.2 & 22.5 & 36.11 \\
BiSpikCLM  & 0.125 & \textbf{1.0} & Binary & 4 & 23.1   & 0.173 & \textbf{9.43}   & 39.4 & 19.0 & 51.2 & 53.0 & 57.5 & 29.2 & 19.7 & 23.1 & 36.50 \\
TriSpikCLM  & 0.125 & \textbf{1.0} & Ternary & 4 & 25.8   & 0.386 & \textbf{19.92}  & 38.9 & 18.5 & 51.5 & 52.9 & 58.0 & 28.3 & 19.2 & 22.9 & 36.27 \\
\midrule
OPT                  & 0.350 & 180 & $\times$ & — & 360.8  & —     & 197.57 & 47.5 & 22.2 & 55.3 & 57.2 & 66.1 & 40.7 & 25.7 & 26.6 & 42.68 \\
Pythia               & 0.410 & 300 & $\times$ & — & 360.9  & —     & 197.71 & 48.7 & 24.8 & 56.8 & 58.1 & 66.7 & 41.6 & 26.1 & 26.2 & 43.63 \\
BiSpikCLM  & 0.350 & \textbf{2.0} & Binary& 2 & 43.4   & 0.182 & \textbf{9.31}   & 41.5 & 21.7 & 52.3 & 55.1 & 59.7 & 32.7 & 21.2 & 23.8 & 38.48 \\
TriSpikCLM  & 0.350 & \textbf{2.0} & Ternary & 2 & 47.7   & 0.404 & \textbf{18.61}  & 41.5 & 21.7 & 52.4 & 54.9 & 59.1 & 31.6 & 20.8 & 23.1 & 38.14 \\
BiSpikCLM  & 0.350 & \textbf{2.0} & Binary & 4 & 84.2   & 0.178 & \textbf{16.75}  & 42.1 & 21.4 & 52.1 & 56.1 & 60.5 & 33.1 & 21.9 & 23.5 & 38.84 \\
TriSpikCLM  & 0.350 & \textbf{2.0} & Ternary & 4 & 88.3   & 0.377 & \textbf{35.35}  & 41.8 & 21.9 & 52.8 & 55.7 & 60.4 & 34.0 & 21.3 & 23.1 & 38.87 \\
\midrule
OPT                  & 1.300 & 180 & $\times$ & — & 1237.1 & —     & 632.22 & 57.8 & 30.4 & 60.4 & 60.8 & 71.7 & 52.6 & 33.4 & 30.7 & 49.73 \\
Pythia               & 1.400 & 300 & $\times$ & — & 1237.4 & —     & 632.48 & 60.5 & 31.2 & 61.3 & 61.1 & 71.1 & 53.6 & 33.2 & 31.9 & 50.49 \\
BiSpikCLM  & 1.300 & \textbf{10.0} & Binary & 2 & 66.7   & 0.192 & \textbf{37.16}  & 45.7 & 23.5 & 54.2 & 56.3 & 62.3 & 40.2 & 24.5 & 24.0 & 41.33 \\
TriSpikCLM  & 1.300 & \textbf{10.0} & Ternary & 2 & 74.2   & 0.426 & \textbf{74.32}  & 44.5 & 23.7 & 54.2 & 55.3 & 62.3 & 40.4 & 24.6 & 23.6 & 41.08 \\
BiSpikCLM  & 1.300 & \textbf{10.0} & Binary & 4 & 131.9  & 0.184 & \textbf{66.89}  & 46.3 & 24.3 & 55.6 & 56.8 & 63.4 & 41.7 & 25.2 & 24.3 & 42.19 \\
TriSpikCLM  & 1.300 & \textbf{10.0} & Ternary & 4 & 141.6  & 0.411 & \textbf{141.21} & 45.8 & 24.5 & 55.6 & 56.1 & 63.4 & 41.3 & 25.5 & 24.8 & 42.12 \\
\bottomrule
\end{tabular}
\end{table*}

\section{Experiments}

\subsection{Training Details}
We use \textbf{FineWeb-Edu}~\citep{penedo2024fineweb}, a high-quality subset of the FineWeb corpus curated for factual and educational content. A 10B-token portion of the dataset is selected for pretraining. Notably, our BiSpikCLM models achieve competitive performance under strict energy constraints, despite being trained on orders-of-magnitude fewer tokens (1–10 billion) compared to conventional ANN counterparts (typically requiring more than 100 Billion tokens), even at reduced parameter scales (0.125B-1.3B). The detailed training setup are provided in the Appendix~\ref{appendix:training}.

\subsection{Model Evaluation}
We evaluate models using zero-shot accuracy on diverse commonsense reasoning and QA benchmarks, including ARC-Easy (ARC-e), ARC-Challenge (ARC-c)~\citep{clark2018think}, Winogrande (WG)~\citep{sakaguchi2021winogrande}, BoolQ (BQ)~\citep{clark2019boolq}, PIQA~\citep{bisk2020piqa}, HellaSwag (HS)~\citep{zellers2019hellaswag}, OpenBookQA (OBQA)~\citep{mihaylov2018can}, and HeadQA (HQA)~\citep{vilares2019head}, measuring the generalization and reasoning abilities without task-specific finetuning. As shown in Table~\ref{tab:results}, BiSpikCLM achieves 82.60–94.56\% of the zero-shot accuracy of counterpart ANN models at the same scale, despite using significantly fewer operations and training tokens. For example, BiSpikCLM (1.3B, 10B tokens, 4 steps) reaches 42.19\% accuracy versus 49.73\% for OPT-1.3B, consuming only ~10.6\% of the energy per inference. For fair comparison with existing spiking LLMs, we focus on~\citet{zhu2023spikegpt}, a decoder-only SNN trained from scratch and architecturally comparable. We don't directly compare with ~\citet{lv2023spikebert,bal2024spikingbert,xing2024spikelm}, which are not decoder-only and target different downstream tasks. And since ~\citet{xing2024spikellm} is derived via quantization and spiking conversion from pretrained ANN LLMs, we defer detailed comparisons with such quantization-based approaches to Appendix~\ref{appendix:spikellm}.

\subsection{Energy Consumption}
To assess the efficiency of SNNs, we first measure the firing rate, defined as the average proportion of active spikes, where lower rates indicate higher sparsity and greater energy efficiency. Based on the firing rate, we then estimate the theoretical energy consumption during inference by simulating a 45nm neuromorphic chip, following ~\citet{horowitz2014energy,kundu2021spike,yin2021accurate,kim2021optimizing}. Energy estimates are based on the total number of spike operations (SOPs), compared against floating-point operations (FLOPs) in baseline ANN models. Detailed computation steps are provided in Appendix~\ref{appendix:energy}. Table~\ref{tab:results} reports per-sample energy, firing rates, and zero-shot accuracy. BiSpikCLM consumes an order of magnitude less energy than ANN baselines (e.g., 9.43 mJ vs. 126.01 mJ at 125M) while achieving over 93\% of the accuracy. Across 0.125B–1.3B parameters, it maintains competitive performance at only \textbf{4.16\%–5.87\%} of the computational cost. Moreover, increasing time steps slightly improves performance (36.05\% → 36.50\% at 125M) with moderate energy overhead. TriSpikCLM achieves slightly higher accuracy at higher energy, providing a flexible trade-off. These findings validate the viability of SNN-based LLMs for energy-constrained environments, such as edge devices and neuromorphic accelerators.

\begin{figure}[t]
    \centering

    \begin{subfigure}{0.48\columnwidth}
        \centering
        \includegraphics[width=\linewidth]{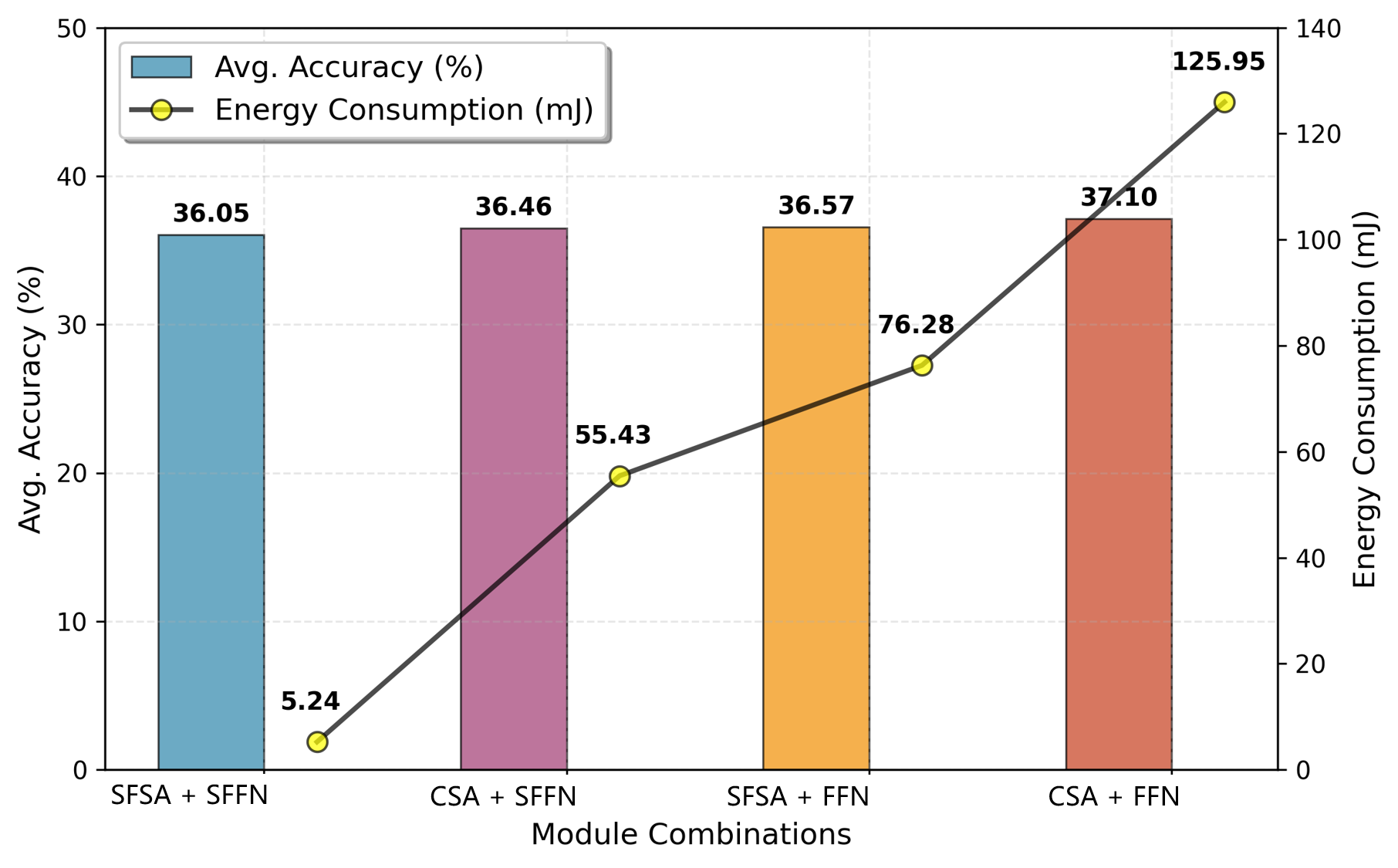}
        \caption{Spike-driven modules.}
        \label{fig:ablation1}
    \end{subfigure}
    \hfill
    \begin{subfigure}{0.48\columnwidth}
        \centering
        \includegraphics[width=\linewidth]{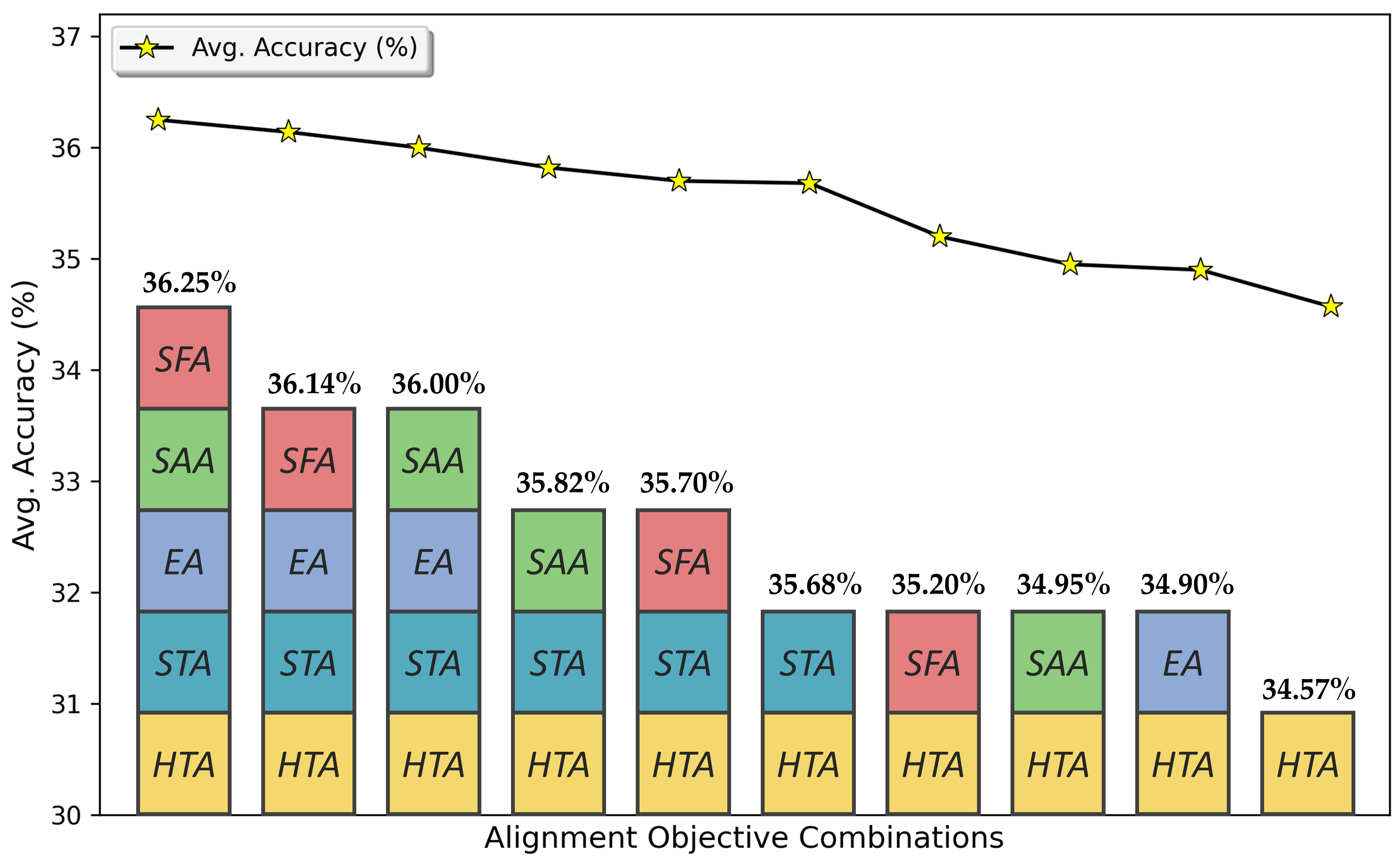}
        \caption{Alignment objectives.}
        \label{fig:ablation2}
    \end{subfigure}

    \vspace{2mm}

    \begin{subfigure}{0.48\columnwidth}
        \centering
        \includegraphics[width=\linewidth]{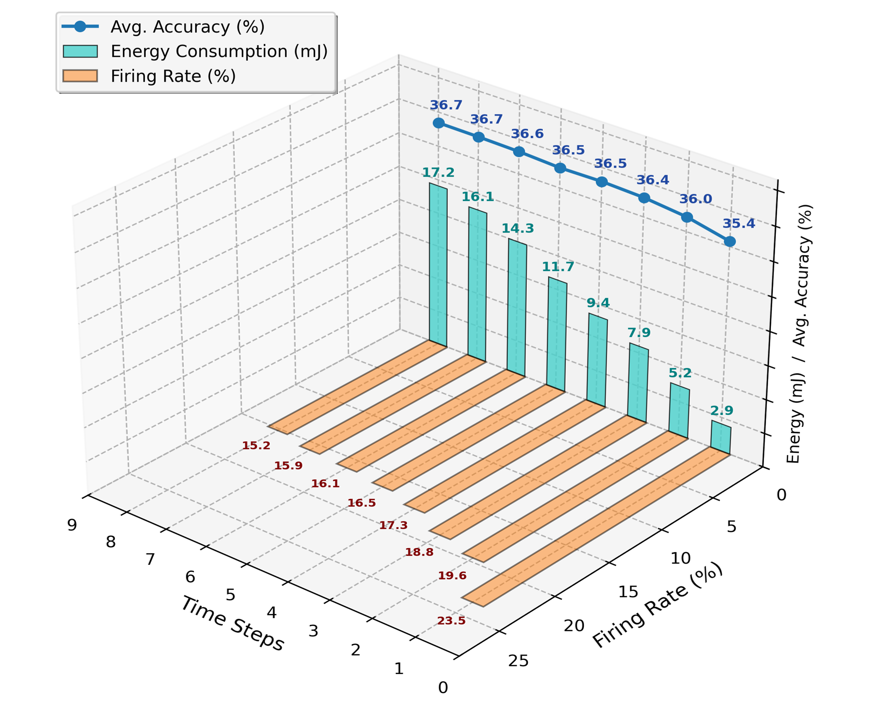}
        \caption{Varying time steps.}
        \label{fig:ablation3}
    \end{subfigure}
    \hfill
    \begin{subfigure}{0.48\columnwidth}
        \centering
        \includegraphics[width=\linewidth]{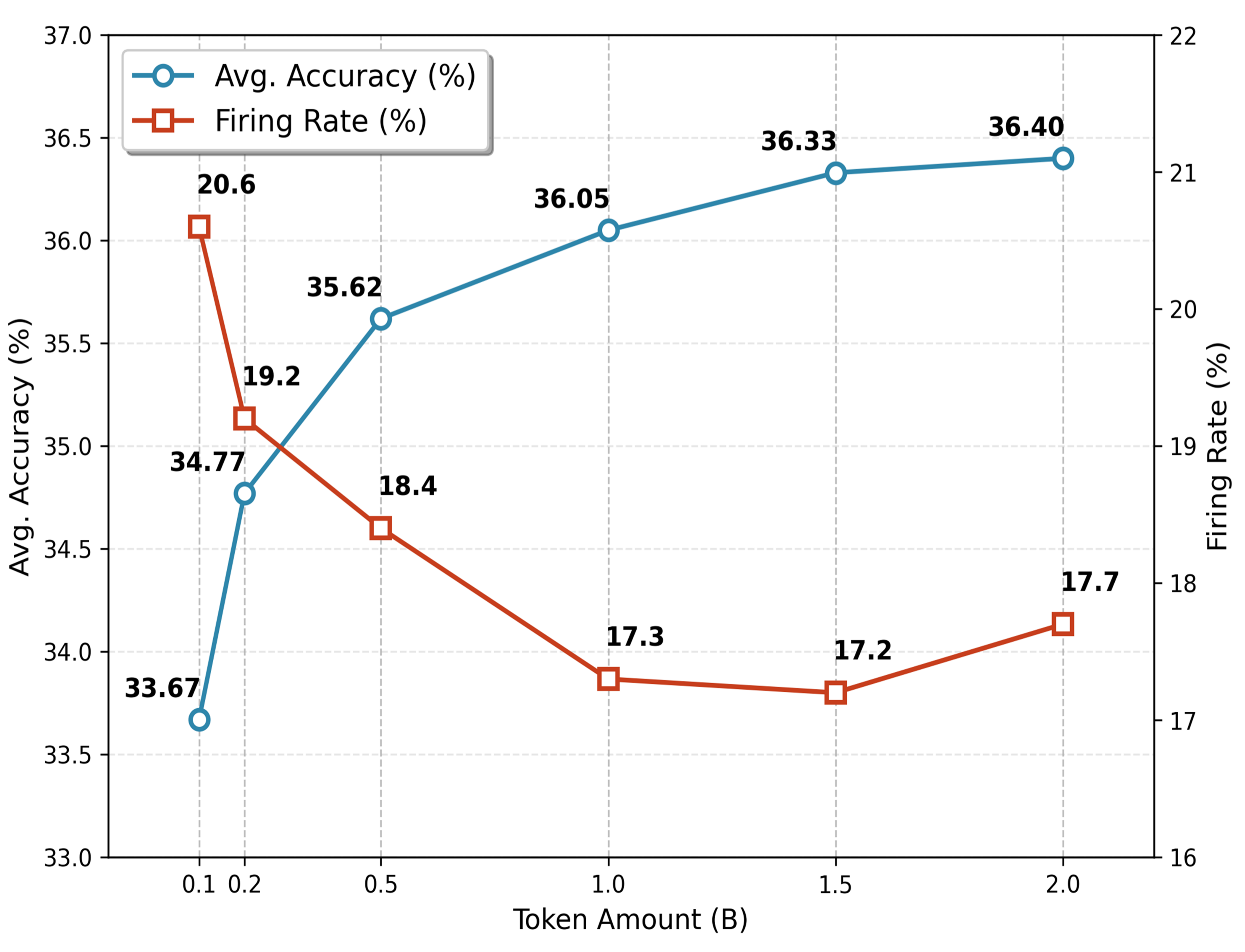}
        \caption{Training data volume.}
        \label{fig:ablation4}
    \end{subfigure}

    \caption{Visualization of ablation experiments.}
    \label{fig:ablation_overall}
\end{figure}

\subsection{Ablation Studies}
We conduct ablation experiments on the \textbf{BiSpikCLM} model (125M parameters) to evaluate the contributions of key components and training factors, specifically: (1) spike-driven modules, (2) simulation time steps, (3) training token volume, and (4) spike-aware distillation alignment.
\vspace{-0.1cm} 
\paragraph{Spike-driven Modules}
We first examine spike-driven modules by replacing SFSA and SFFN with their ANN counterparts. As shown in Figure~\ref{fig:ablation1}, the fully spike-driven design (SFSA+SFFN) achieves 36.05\% accuracy with only 5.24 mJ energy. Replacing either module slightly improves accuracy (up to 36.57\%) but increases energy consumption by more than $10\times$. Using ANN attention and FFN together yields 37.10\% accuracy at the cost of $24\times$ higher energy. These results highlight that SFSA and SFFN are crucial for preserving BiSpikCLM’s energy efficiency.
\vspace{-0.1cm} 
\paragraph{Distillation Alignment Strategy}
We assess each alignment component by progressively adding it to the base HTA model (Figure~\ref{fig:ablation2}). STA yields the largest individual gain (+1.11\%), while EA contributes less (+0.33\%), suggesting limited standalone benefit. Higher-level constraints SFA and SAA further improve performance (+0.63\% and +0.38\%). When combining STA with EA or SFA, the improvements increase more significantly, indicating complementary effects. Combining multiple objectives produces stronger gains, and the full set (STA, EA, SFA, SAA) achieves the best accuracy (36.25\%, +1.68\%), demonstrating the complementary benefits of hierarchical alignment for effective knowledge transfer from ANN teachers to SNN students.
\vspace{-0.1cm} 
\paragraph{Time Steps}
We investigate simulation steps from 1 to 8 (Figure~\ref{fig:ablation3}). Increasing time steps improves accuracy by refining temporal resolution, but gains saturate beyond 4 steps while energy cost rises sharply. Firing rates gradually decline with longer steps, indicating increased sparsity. Overall, 2–4 steps provide a good trade-off between efficiency and performance, while more steps yield marginal accuracy gains at higher energy cost.
\vspace{-0.1cm} 
\paragraph{Training Token Volume}
We evaluate the effect of training data by varying tokens from 0.1B to 2.0B (Figure~\ref{fig:ablation4}). Accuracy improves consistently, with the largest gains in the low-data regime (0.1B → 0.5B) and saturation beyond 1.0B tokens, rising from 33.67\% to 36.40\% (94.3\% of the teacher’s 38.60\%). Interestingly, the firing rate gradually decreases as token volume grows, suggesting that larger datasets not only improve performance but also enhance temporal sparsity, likely due to more structured representations. Additional firing patterns are visualized in Appendix~\ref{appendix:visual_firing}.

\section{Conclusion}
In this work, we introduce BiSpikCLM, the first fully binary spiking MatMul-free causal language model. Its Softmax-Free Spiking Attention (SFSA) enables spike-based autoregressive modeling, reducing computational cost by over 10$\times$ compared to ANNs. And Spike-Aware Alignment Distillation (SpAD) framework further improves performance by aligning representations across multiple levels. 
We further introduce TriSpikCLM as a ternary extension and comparison baseline. Results show that the fully binary design already achieves competitive accuracy with clear and consistent energy advantages, demonstrating a promising pathway for energy-efficient, brain-inspired NLP. 

\vspace{0.2cm} 

\textbf{Limitations:} While BiSpikCLM significantly reduces computational cost, its accuracy still lags behind large-scale ANN LLMs on some benchmarks, and training larger models requires careful tuning of time steps and distillation schedules. Future work could explore improved spike-based architectures and more effective distillation strategies to further close the gap with ANN performance.

\section*{Impact Statement}
This paper presents work whose goal is to advance the field of Machine
Learning. There are many potential societal consequences of our work, none 
which we feel must be specifically highlighted here.

\bibliography{example_paper}
\bibliographystyle{icml2026}

\newpage
\appendix
\onecolumn

\section{Use of LLMs}
In this work, we used Large Language Models (LLMs) in a limited and auxiliary capacity. Specifically, LLMs were employed for retrieval and discovery of related literature on Spiking Neural Networks (SNNs), neuromorphic computing, and energy-efficient large language models. This assisted us in identifying relevant prior work and ensuring broader coverage of existing approaches. Importantly, LLMs were not involved in designing algorithms, implementing models, or analyzing experimental results. All methodological innovations, including the Softmax-Free Spiking Attention (SFSA) and Spike-Aware Alignment Distillation (SpAD), were independently conceived, implemented, and validated by the authors. Thus, the role of LLMs was restricted to accelerating literature exploration, without influencing the scientific contributions of this paper.

\section{Algorithm Procedure of SFSA}
\label{appendix:sfsa}
\begingroup
\renewcommand{\baselinestretch}{1.05}
\begin{algorithm}[H]
\caption{Softmax-Free Spiking Attention (SFSA)}
\label{alg:sfsa}
\textbf{Input}: Spike-based input $X$ \\
\textbf{Output}: Spiking attention output

\begin{algorithmic}[1]
\STATE {\footnotesize \textbf{// Step 1: Input Projection ($FP \gets Spike\,@\,FP$)}}
\STATE $q, k, v \gets \text{Linear}_{Q,K,V}(X)$

\STATE {\footnotesize \textbf{// Step 2: Spiking Neuron Encoding ($Spike \gets FP$)}}
\STATE $\text{spike}_q \gets SN_{Q}(q)$ 
\STATE $\text{spike}_k \gets SN_{K}(k)$ 
\STATE $\text{spike}_v \gets SN_{V}(v)$

\STATE {\footnotesize \textbf{// Step 3: Attention ($Integer \gets Spike\,@\,Spike$)}}
\STATE $\text{attn\_int} \gets \text{spike}_q @ \text{spike}_k^{T}$

\STATE {\footnotesize \textbf{// Step 4: Causal Masking and Spiking}}
\STATE $ \text{causal\_mask} \gets \text{causal\_mask} \odot \text{attn\_mask} $
\STATE $\text{attn\_causal} \gets \text{causal\_mask} \odot \text{attn\_int}$
\STATE $spike_{attn} \gets SN_{Attn}(\text{attn\_causal})$

\STATE {\footnotesize \textbf{// Step 5: Summation ($Integer \gets Spike\,@\,Spike$)}}
\STATE $\text{attn\_out} \gets spike_{attn} @ \text{spike}_v^{T}$
\STATE $spike_{attn\_out} \gets SN_{AttnOut}(\text{attn\_out})$

\STATE {\footnotesize \textbf{// Step 6: Output Projection ($FP \gets Spike\,@\,FP$)}}
\STATE $\text{fp\_out} \gets \text{Linear}_{\text{out}}(spike_{attn\_out})$

\STATE {\footnotesize \textbf{// Step 7: Spiking ($Spike \gets FP$)}}
\STATE $\text{spike\_out} \gets SN_{Out}(\text{fp\_out})$

\STATE \textbf{return} $\text{spike\_out}$
\end{algorithmic}
\end{algorithm}
\endgroup

\section{Detailed design of SpAD}
\label{appendix:SpAD}
We present \textbf{Spike-Aware Alignment Distillation (SpAD)}, a framework that distills a frozen ANN teacher into a trainable SNN student. SpAD consists of five losses: Embedding Alignment (EA), Spike-Attention Alignment (SAA), Spike-Feature Alignment (SFA), Soft-Target Alignment (STA), and Hard-Target Alignment (HTA). EA/STA/HTA follow standard distillation practice, while SAA and SFA explicitly address the two principal mismatches between ANN and SNN:
(i) \emph{continuous vs.\ spiking} representations and (ii) \emph{static vs.\ temporal} representations.

\paragraph{Notation.}
Let $T$ be the number of spiking time steps, $L$ the sequence length, and $d$ the hidden dimension.
Teacher (ANN) representations are time-independent, while student (SNN) representations are time-indexed.
We denote attention maps by $A$ (shape $L\times L$ for teacher; $T\times L\times L$ for student) and hidden features by $H$ (shape $L\times d$ for teacher; $T\times L\times d$ for student).
If teacher and student differ in dimension (or heads/layers), we use lightweight projections and layer/head mapping to align shapes.

\subsection{Embedding Alignment (EA)}
\label{appendix:SpAD_emb}

We align the embedding outputs to ease early optimization.
Let $E_{\text{ANN}}\in\mathbb{R}^{L\times d_T}$ and $E_{\text{SNN}}(t)\in\mathbb{R}^{L\times d_S}$ be teacher and student embeddings.
We first apply a linear projection $\Pi_{\text{emb}}:\mathbb{R}^{d_S}\to\mathbb{R}^{d_T}$ if $d_S\neq d_T$ and then use temporal fusion:
\begin{equation}
\bar E_{\text{SNN}} = \frac{1}{T}\sum_{t=1}^{T}\Pi_{\text{emb}}\!\left(E_{\text{SNN}}(t)\right),\qquad
\mathcal{L}_{\text{emb}} = \left\|E_{\text{ANN}}-\bar E_{\text{SNN}}\right\|_F^2.
\end{equation}

\subsection{Spike-Attention Alignment (SAA)}
\label{appendix:SpAD_attn}
We align self-attention since it encodes relational structure (token-to-token interactions), which is difficult to transfer via logits alone.
Let $A_{\text{ANN}}\in\mathbb{R}^{L\times L}$ be the teacher attention map and $A_{\text{SNN}}(t)\in\{0,1\}^{L\times L}$ be the student spike-attention at time step $t$.

\paragraph{(a) Temporal Replication and Spike Encoding (Teacher $\rightarrow$ Student).}
To align representations in the spiking domain, we construct a spike proxy of the teacher attention.
First, we replicate the static teacher attention map across time:
\begin{equation}
\tilde{A}_{\mathrm{ANN}}(t) = A_{\mathrm{ANN}},\qquad t=1,\ldots,T.
\end{equation}
We then apply the spiking neuron $\sigma_{\text{spike}}$ entry-wise to obtain the teacher spiking attention:
\begin{equation}
\hat A_{\mathrm{spike}}^{\mathrm{ANN}}(t) = \sigma_{\text{spike}}\!\left(\tilde A_{\mathrm{ANN}}(t)\right),\qquad t=1,\ldots,T,
\end{equation}

and match time-averaged firing rates:
\begin{equation}
\bar A_{\text{spike}}^{\text{ANN}} = \frac{1}{T}\sum_{t=1}^{T}\hat A_{\text{spike}}^{\text{ANN}}(t),\qquad
\mathcal{L}^{\text{RateMSE}}_{\text{attn}} = \left\|\bar A_{\text{spike}}^{\text{ANN}}-\bar A_{\text{SNN}}\right\|_F^2.
\label{eq:attn_ratemse}
\end{equation}

\paragraph{(b) Temporal Fusion and Distribution Matching (student $\rightarrow$ teacher).}
We temporally average the student attention to remove the extra temporal dimension:
\begin{equation}
\bar A_{\text{SNN}} = \frac{1}{T}\sum_{t=1}^{T} A_{\text{SNN}}(t),\qquad
\mathcal{L}^{\text{MSE}}_{\text{attn}} = \left\|A_{\text{ANN}}-\bar A_{\text{SNN}}\right\|_F^2.
\label{eq:attn_fusion}
\end{equation}

\paragraph{Total attention alignment.}
\begin{equation}
\mathcal{L}_{\text{attn}} = \gamma_{\text{attn}} \mathcal{L}^{\text{RateMSE}}_{\text{attn}} + (1-\gamma_{\text{attn}})\, \mathcal{L}^{\text{MSE}}_{\text{attn}}.
\end{equation}

\subsubsection{Theoretical justification}
\label{appendix:SpAD_theory}
The two strategies can be justified by interpreting spiking attention as a rate-coded random variable, and temporal averaging as an estimator of its expectation.

\begin{lemma}[Entry-wise concentration under geometric mixing]
\label{lem:temporal_fusion_conc}
Fix $x$ and an entry $(i,j)$. Let $X_t = A_{\mathrm{SNN}}(t)[i,j]\in\{0,1\}$ and $\mu = \mathbb{E}[X_t\mid x]$.
Assume $\{X_t\}_{t\ge 1}$ is (conditionally on $x$) stationary and geometrically $\alpha_{\mathrm{mix}}$ with
\begin{equation}
\alpha_{\mathrm{mix}}(k) \le c_0 \exp(-k/\tau_{\mathrm{mix}})\qquad (k\ge 1),
\end{equation}
for some constants $c_0>0$ and mixing time $\tau_{\mathrm{mix}}\ge 1$
(see, e.g., \citet{doukhan1995mixing,bradley2005basic}).
Then for any $\epsilon>0$,
\begin{equation}
\Pr(|\bar X-\mu|\ge \epsilon\mid x)\;\le\; 2\exp\!\Big(-c\,\frac{T\epsilon^2}{\tau_{\mathrm{mix}}}\Big),
\end{equation}
for a universal constant $c>0$ (see, e.g., \citet{merlevede2009bernstein,rio2000tumor}). Moreover,
\begin{equation}
\mathbb{E}\big[(\bar X-\mu)^2\mid x\big] \;=\; \mathcal{O}\Big(\frac{\tau_{\mathrm{mix}}}{T}\Big),
\qquad
\mathbb{E}\big[|\bar X-\mu|\mid x\big] \;=\; \mathcal{O}\Big(\sqrt{\frac{\tau_{\mathrm{mix}}}{T}}\Big).
\end{equation}
\end{lemma}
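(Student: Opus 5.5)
Write $\bar X=\frac1T\sum_{t=1}^T X_t$, everything conditional on $x$. I would reduce to the standard Bernstein-type inequality for geometrically strongly mixing bounded sequences (Merlev\`ede--Peligrad--Rio; also Rio), and then obtain the moment bounds by direct covariance computation and by integrating the tail. Centre first: set $Y_t=X_t-\mu$. Then $|Y_t|\le 1$ and $\mathbb{E}[Y_t\mid x]=0$. Since each $Y_t$ is a measurable function of $X_t$ alone, $\{Y_t\}$ has the same $\alpha$-mixing coefficients, so $\alpha_{\mathrm{mix}}(k)\le c_0e^{-k/\tau_{\mathrm{mix}}}$ still holds.

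\emph{Step 1 (second moment).} Expand
\begin{equation}
\mathbb{E}[(\bar X-\mu)^2\mid x]=\frac1{T^2}\sum_{s,t}\mathrm{Cov}(X_s,X_t)\le \frac1T\Big(\mathrm{Var}(X_1)+2\sum_{k\ge1}|\mathrm{Cov}(X_1,X_{1+k})|\Big),
\end{equation}
using stationarity. For bounded variables, Davydov's (or Ibragimov's) covariance inequality gives $|\mathrm{Cov}(X_1,X_{1+k})|\le 4\alpha_{\mathrm{mix}}(k)\le 4c_0e^{-k/\tau_{\mathrm{mix}}}$. Summing the geometric series gives at most $4c_0/(e^{1/\tau_{\mathrm{mix}}}-1)\le 4c_0\tau_{\mathrm{mix}}$. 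Since $\tau_{\mathrm{mix}}\ge1$ and $\mathrm{Var}(X_1)\le 1/4$, the bracket is $\mathcal O(\tau_{\mathrm{mix}})$, which yields the MSE bound $\mathcal O(\tau_{\mathrm{mix}}/T)$. Jensen's inequality, $\mathbb{E}|\bar X-\mu|\le(\mathbb{E}(\bar X-\mu)^2)^{1/2}$, then gives the $\mathcal O(\sqrt{\tau_{\mathrm{mix}}/T})$ bound.

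\emph{Step 2 (tail bound).} Apply the Bernstein inequality for geometrically $\alpha$-mixing bounded sequences. It states that for $|Y_t|\le M$,
\begin{equation}
\Pr\Big(\Big|\sum_{t=1}^T Y_t\Big|\ge u\Big)\le \exp\Big(-\frac{Cu^2}{Tv^2+M^2+uM\,\tau_{\mathrm{mix}}\log^2 T}\Big),
\end{equation}
where $v^2$ is the long-run variance, which is $\mathcal O(\tau_{\mathrm{mix}})$ by Step 1. The $\log^2T$ factor does not match the claimed rate, so I would instead use the cleaner blocking argument, as in Rio's book or the Bernstein blocking technique:
\begin{enumerate}
\item Split $\{1,\dots,T\}$ into alternating blocks of length $b\asymp\tau_{\mathrm{mix}}$.
\item Replace the odd blocks by independent copies using Berbee/Bradley coupling. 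The coupling error is at most $(T/b)\,\alpha_{\mathrm{mix}}(b)$, or equivalently the cost of Bradley's lemma.
\item Apply Hoeffding's inequality to the roughly $T/(2b)$ independent block sums, each bounded by $b$. This gives $\exp(-c\,T\epsilon^2/b)=\exp(-c\,T\epsilon^2/\tau_{\mathrm{mix}})$ for each parity class.
\item Combine the two parity classes with a union bound on $\epsilon/2$, which produces the factor $2$.
\end{enumerate}

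\emph{Main obstacle.} The coupling error term is additive, of order $(T/\tau_{\mathrm{mix}})c_0e^{-b/\tau_{\mathrm{mix}}}$, and is not automatically dominated by $\exp(-cT\epsilon^2/\tau_{\mathrm{mix}})$. I would try choosing $b=\lceil \tau_{\mathrm{mix}}\cdot \kappa\rceil$ with $\kappa$ of order $\log(c_0T)$. That choice reintroduces a logarithmic loss, which I would absorb into $c$ only in the regime $T\epsilon^2\gtrsim\tau_{\mathrm{mix}}$. In the complementary regime the right-hand side is at least $2e^{-c}$, so choosing $c$ small enough makes it at least $1$ and the bound is trivial.

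If this absorption fails, my fallback is to state the tail with the Merlev\`ede--Peligrad--Rio constant. That is exactly the form the statement cites, with universal $c$ under $\tau_{\mathrm{mix}}\ge1$. I expect this reconciliation, between the exact exponent $T\epsilon^2/\tau_{\mathrm{mix}}$ and the logarithmic factors in known mixing Bernstein bounds, to be the delicate point.
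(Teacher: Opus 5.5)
Your Step~1 is correct, and it already goes beyond the paper, which gives no argument for this lemma except pointers to Merlev\`ede--Peligrad--Rio and Rio. Stationarity, $|\mathrm{Cov}(X_1,X_{1+k})|\le 4\alpha_{\mathrm{mix}}(k)$ (for $\{0,1\}$-valued variables even $\le\alpha_{\mathrm{mix}}(k)$ directly from the definition) and the geometric sum give $\mathbb{E}[(\bar X-\mu)^2\mid x]\le(\tfrac14+8c_0\tau_{\mathrm{mix}})/T$. Jensen then gives the $L^1$ bound. These two moment bounds are the only part of the lemma used later, in Proposition~\ref{prop:fusion_rate_match} and in the teacher-side consistency claim. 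So that part of your argument is complete, with the caveat that the $\mathcal{O}$-constant necessarily depends on $c_0$.

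The genuine gap is the tail bound, and neither of your routes closes it.

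\textbf{Coupling.} Berbee's coupling needs $\beta$-mixing, which is not assumed. Under $\alpha$-mixing you have only Bradley's coupling lemma (per-block error of order $\epsilon^{-1/2}\alpha_{\mathrm{mix}}(b)^{1/2}$) or the covariance inequality inside the Laplace transform. Either way the error is only as small as a power of $c_0e^{-b/\tau_{\mathrm{mix}}}$, accumulated over $T/b$ blocks.

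\textbf{Absorption.} For fixed $\epsilon$ and $T\to\infty$ the target $2e^{-cT\epsilon^2/\tau_{\mathrm{mix}}}$ decays exponentially in $T$. Dominating the coupling error therefore forces $b\gtrsim T\epsilon^2$, which makes the Hoeffding exponent $T\epsilon^2/b$ bounded. Your choice $b\asymp\tau_{\mathrm{mix}}\log(c_0T)$ instead leaves the exponent $T\epsilon^2/(\tau_{\mathrm{mix}}\log(c_0T))$ plus an error that is only polynomially small in $T$. A factor growing with $T$ cannot be absorbed into a constant, and the trivial regime $T\epsilon^2\lesssim\tau_{\mathrm{mix}}$ does not cover the case that matters.

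\textbf{Fallback.} At $u=T\epsilon$ the Merlev\`ede--Peligrad--Rio inequality yields $\exp(-C\min\{T\epsilon^2/v^2,\,T\epsilon/\log^2T\})$, with $C$ depending on the mixing rate. This is weaker than the claim once $\epsilon\gtrsim\tau_{\mathrm{mix}}/\log^2T$. So it is not ``exactly the form the statement cites'', and the paper's citation has the same mismatch.

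\textbf{Universality of $c$.} The constant $c$ cannot be universal. Take the $m$-dependent process $X_t=\mathbf{1}[\sum_{i<m}\xi_{t+i}>m/2]$ with i.i.d.\ fair bits $\xi_i$. It satisfies the hypothesis with $\tau_{\mathrm{mix}}=1$ and $c_0=e^m/4$, since $\alpha_{\mathrm{mix}}\le\tfrac14$ always and vanishes for $k\ge m$. Yet $\mathrm{Var}(\bar X)\asymp m/T$. At $\epsilon=\sqrt{m/T}$ the CLT for $m$-dependent sequences keeps the left side above a positive constant independent of $m$, while the right side is $2e^{-cm}$.

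A correct version must do one of two things:
\begin{enumerate}
\item keep the Bernstein form with the $\log^2T$ term and let $c=c(c_0)$;
\item strengthen the hypothesis, e.g.\ to a stationary Markov chain with spectral gap of order $1/\tau_{\mathrm{mix}}$, where spectral-gap Bernstein inequalities give exactly $\exp(-cT\epsilon^2/\tau_{\mathrm{mix}})$ for $\epsilon\le 1$.
\end{enumerate}
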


The above concentration result applies to any stationary geometrically mixing spike process,
and will be instantiated for both the teacher-side spike proxy (method a)
and the student-side temporal fusion (method b).

\paragraph{Teacher-side spike proxy is consistent.}
In the main text, the spiking map $\sigma_{\text{spike}}$ is implemented by driving a leaky integrate-and-fire (LIF) neuron with a constant input current (BiSpikCLM). 
Accordingly, the teacher spiking attention is \emph{not} an i.i.d.\ Bernoulli process, but a rate-coded spike train generated by LIF dynamics with reset.

Concretely, for each attention entry $(i,j)$, we treat $A_{\mathrm{ANN}}[i,j]$ as a constant input current to a LIF neuron, and define
\begin{equation}
\hat A^{\mathrm{ANN}}_{\mathrm{spike}}(t)[i,j]
= \mathrm{LIF}\big(A_{\mathrm{ANN}}[i,j]\big)_t \in \{0,1\},
\qquad t=1,\ldots,T .
\end{equation}
The resulting spike process is stationary after a short transient and admits a well-defined firing rate
\begin{equation}
p_{ij} = \mathbb{E}\big[\hat A^{\mathrm{ANN}}_{\mathrm{spike}}(t)[i,j]\big]
= g\big(A_{\mathrm{ANN}}[i,j]\big),
\end{equation}
where $g(\cdot)$ is the implicit rate–current transfer function of the LIF neuron, which is monotone increasing.

Although $\{\hat A^{\mathrm{ANN}}_{\mathrm{spike}}(t)[i,j]\}_{t\ge1}$ exhibits temporal correlations due to membrane integration and reset, the leak factor $\beta<1$ induces exponential forgetting of past states. Under standard regularity assumptions for LIF dynamics, the spike process is geometrically $\alpha_{\mathrm{mix}}$-mixing with
\begin{equation}
\alpha_{\mathrm{mix}}(k) \le c_0 \exp(-k/\tau_{\mathrm{mix}}^{\mathrm{ANN}}),
\qquad
\tau_{\mathrm{mix}}^{\mathrm{ANN}} = \mathcal{O}((1-\beta)^{-1}).
\end{equation}

Let
\begin{equation}
\bar A^{\mathrm{ANN}}_{\mathrm{spike}}
= \frac{1}{T}\sum_{t=1}^{T} \hat A^{\mathrm{ANN}}_{\mathrm{spike}}(t).
\end{equation}
Then, by the same mixing-based concentration argument as in Lemma~\ref{lem:temporal_fusion_conc}, we have
\begin{equation}
\mathbb{E}\big[\bar A^{\mathrm{ANN}}_{\mathrm{spike}}\big]
= g(A_{\mathrm{ANN}}), \qquad
\mathbb{E}\!\left[\left\|\bar A^{\mathrm{ANN}}_{\mathrm{spike}} - g(A_{\mathrm{ANN}})\right\|_F^2\right]
= \mathcal{O}\!\Big(\frac{\tau_{\mathrm{mix}}^{\mathrm{ANN}}\,L^2}{T}\Big).
\end{equation}

Therefore, $\bar A^{\mathrm{ANN}}_{\mathrm{spike}}$ can be interpreted as a consistent estimator of the
rate-coded teacher attention $g(A_{\mathrm{ANN}})$ under geometric mixing.
This provides a principled spiking-domain representation of the teacher attention
that is directly comparable to temporally averaged student spiking attention.

\paragraph{Student-side temporal fusion concentrates to the rate-coded attention.}
Fix an input $x$ and an entry $(i,j)$ of the attention matrix. Let
\begin{equation}
X_t = A_{\mathrm{SNN}}(t)[i,j] \in \{0,1\}
\end{equation}
be the spike at time step $t$, and define its conditional mean
\begin{equation}
\mu = \mathbb{E}[X_t \mid x].
\end{equation}
The temporally averaged spiking attention is
\begin{equation}
\bar X = \frac{1}{T} \sum_{t=1}^{T} X_t.
\end{equation}

For LIF-type dynamics, the leak factor $\beta<1$ induces exponential forgetting of past states.
Under mild regularity, the induced spike process is geometrically mixing with
$\tau_{\mathrm{mix}}=\mathcal{O}((1-\beta)^{-1})$, so temporal averaging achieves an effective sample size
$T_{\mathrm{eff}}\approx T/\tau_{\mathrm{mix}}$.

Extending to the full attention matrix, define
\begin{equation}
\bar A_{\mathrm{SNN}} = \frac{1}{T} \sum_{t=1}^T A_{\mathrm{SNN}}(t), \qquad
A^\star = \mathbb{E}[A_{\mathrm{SNN}}(t) \mid x].
\end{equation}

\begin{proposition}[Matrix concentration and loss consistency under mixing]
\label{prop:fusion_rate_match}
Assume each entry process $\{A_{\mathrm{SNN}}(t)[i,j]\}_{t\ge 1}$ satisfies the condition in Lemma~\ref{lem:temporal_fusion_conc} with the same $\tau_{\mathrm{mix}}$ (conditionally on $x$). Then
\begin{equation}
\mathbb{E}\big[\|\bar A_{\mathrm{SNN}} - A^\star\|_F^2 \mid x\big]
= \sum_{i,j}\mathbb{E}\big[(\bar A_{\mathrm{SNN}}[i,j]-A^\star[i,j])^2\mid x\big]
= \mathcal{O}\Big(\frac{\tau_{\mathrm{mix}}\,L^2}{T}\Big).
\end{equation}
Furthermore, for $\hat{\mathcal{L}} = \|A_{\mathrm{ANN}}-\bar A_{\mathrm{SNN}}\|_F^2$ and
$\mathcal{L}^\star = \|A_{\mathrm{ANN}}-A^\star\|_F^2$,
\begin{equation}
\mathbb{E}\big[|\hat{\mathcal{L}}-\mathcal{L}^\star|\mid x\big]
\le 2\|A_{\mathrm{ANN}}-A^\star\|_F\,\mathbb{E}\big[\|\bar A_{\mathrm{SNN}}-A^\star\|_F\mid x\big]
+\mathbb{E}\big[\|\bar A_{\mathrm{SNN}}-A^\star\|_F^2\mid x\big],
\end{equation}
which vanishes as $T\to\infty$ provided $T\gg \tau_{\mathrm{mix}}$ (see, e.g., \citet{tropp2015introduction,levin2017markov}).
\end{proposition}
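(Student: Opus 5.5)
The proof has two parts: a Frobenius-norm concentration bound obtained entrywise from Lemma~\ref{lem:temporal_fusion_conc}, and a deterministic perturbation identity for the squared loss into which that bound is inserted.

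\textbf{Step 1 (Frobenius concentration).} The squared Frobenius norm is a sum of squared entries, so linearity of conditional expectation gives
\[
\mathbb{E}\big[\|\bar A_{\mathrm{SNN}}-A^\star\|_F^2\mid x\big]
=\sum_{i,j}\mathbb{E}\big[(\bar A_{\mathrm{SNN}}[i,j]-A^\star[i,j])^2\mid x\big].
\]
For a fixed entry, $\bar A_{\mathrm{SNN}}[i,j]=\bar X$ and $A^\star[i,j]=\mu$, in the notation of Lemma~\ref{lem:temporal_fusion_conc}. The lemma's second-moment bound gives each summand as $\mathcal{O}(\tau_{\mathrm{mix}}/T)$. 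Because every entry process shares the same $\tau_{\mathrm{mix}}$, I also assume it shares the constant $c_0$, so the implied constant is uniform in $(i,j)$. Summing over the $L^2$ entries yields $\mathcal{O}(\tau_{\mathrm{mix}}L^2/T)$.

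No independence across entries is needed, since only marginal second moments enter. If I wanted to make the lemma's variance bound explicit, I would expand $\mathrm{Var}(\bar X)=T^{-2}\sum_{s,t}\mathrm{Cov}(X_s,X_t)$. For bounded variables, $|\mathrm{Cov}(X_s,X_t)|\le 4\alpha_{\mathrm{mix}}(|s-t|)$, and summing the geometric tail gives $\mathcal{O}(\tau_{\mathrm{mix}}/T)$. The constant $c_0$ is absorbed into the $\mathcal{O}$.

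\textbf{Step 2 (loss perturbation).} Set $D=\bar A_{\mathrm{SNN}}-A^\star$ and $B=A_{\mathrm{ANN}}-A^\star$. Then $A_{\mathrm{ANN}}-\bar A_{\mathrm{SNN}}=B-D$, and expanding the square gives
\[
\hat{\mathcal L}-\mathcal L^\star=-2\langle B,D\rangle_F+\|D\|_F^2 .
\]
By the triangle inequality and Cauchy--Schwarz, $|\hat{\mathcal L}-\mathcal L^\star|\le 2\|B\|_F\|D\|_F+\|D\|_F^2$. Conditional on $x$, both $A^\star$ and $A_{\mathrm{ANN}}$ are deterministic, so $\|B\|_F$ factors out of the conditional expectation. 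This gives the stated inequality.

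\textbf{Step 3 (vanishing).} By Jensen's inequality, $\mathbb{E}[\|D\|_F\mid x]\le(\mathbb{E}[\|D\|_F^2\mid x])^{1/2}=\mathcal{O}(L\sqrt{\tau_{\mathrm{mix}}/T})$. The factor $\|B\|_F$ is finite, bounded for instance by $L$ times a bound on the entries when attention entries lie in $[0,1]$. The right-hand side is therefore
\[
\mathcal{O}\big(\|B\|_F L\sqrt{\tau_{\mathrm{mix}}/T}+L^2\tau_{\mathrm{mix}}/T\big),
\]
which tends to $0$ as $T\to\infty$ with $T\gg\tau_{\mathrm{mix}}$ and $L$ fixed.

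\textbf{Main obstacle.} The delicate point is the uniformity of the constant in Lemma~\ref{lem:temporal_fusion_conc} across entries, which is needed so the $\mathcal{O}$ survives summation over $L^2$ entries. I would handle it by stating explicitly that a common $c_0$ is assumed along with the common $\tau_{\mathrm{mix}}$, which is the natural reading of the hypothesis. The remaining steps are routine algebra.
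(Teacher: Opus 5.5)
Your proposal is correct and matches the argument the paper encodes in the statement itself (the paper gives no standalone proof beyond the displayed chain and its citations): apply the lemma's per-entry $\mathcal{O}(\tau_{\mathrm{mix}}/T)$ second-moment bound, sum over the $L^2$ entries by linearity, then expand $\|B-D\|_F^2$ and apply Cauchy--Schwarz with $B$ fixed given $x$. Your assumption of a common $c_0$ is harmless (for fixed $L$ you could simply take the maximum over finitely many entries), and your Jensen step justifies the ``vanishes as $T\to\infty$'' clause, which the paper only asserts.
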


\paragraph{Alignment of methods (a) and (b).}
By Proposition~\ref{prop:fusion_rate_match}, $\bar A_{\mathrm{SNN}}$ is a consistent estimator of the
student rate-coded attention $A^\star$ when $T\gg\tau_{\mathrm{mix}}$.
Combined with the teacher-side result that $\bar A^{\mathrm{ANN}}_{\mathrm{spike}}$
consistently estimates $g(A_{\mathrm{ANN}})$, the rate-based alignment loss in method~(a),
\begin{equation}
\left\|\bar A^{\mathrm{ANN}}_{\mathrm{spike}} - \bar A_{\mathrm{SNN}}\right\|_F^2,
\end{equation}
can be viewed as an empirical approximation of the ideal population loss
$\|g(A_{\mathrm{ANN}})-A^\star\|_F^2$.
The gap between the two vanishes as
$T\gg \max\{\tau_{\mathrm{mix}},\tau_{\mathrm{mix}}^{\mathrm{ANN}}\}$.

While the above analysis is stated in the asymptotic regime $T\to\infty$,
in practice we employ a small number of spiking time steps (e.g., $T=2$ or $4$).
Importantly, the Rate-MSE objective does not require exact rate convergence,
but only an approximately order-preserving correspondence between input drives
and spike activities, i.e., larger attention values induce higher spike probabilities
even with few time steps under LIF dynamics.
Moreover, stochastic averaging across tokens, attention heads, and training iterations
further mitigates the variance introduced by finite-$T$ sampling.

As a result, the mapping $a \mapsto \hat r_T(a)$ implemented by the LIF-based
spiking map $\sigma_{\mathrm{spike}}$ provides an effective spiking-domain proxy
of ANN attention signals.
Minimizing the Rate-MSE loss in Eq.~\eqref{eq:rate_mse} therefore encourages the
student SNN to match the relative structure of the teacher’s attention,
bridging method~(a) and method~(b) in both theory and practice.

\subsection{Spike-Feature Alignment (SFA)}
\label{appendix:SpAD_feat}

We align intermediate features to transfer compositional and hierarchical knowledge.
Let $H_{\text{ANN}}\in\mathbb{R}^{L\times d_T}$ be the teacher hidden state at a layer and
$H_{\text{SNN}}(t)\in\{0,1\}^{L\times d_S}$ be the student spike feature at time step $t$.
Following prior feature distillation in spiking Transformers (e.g., SpikeBERT and related works),
we map teacher/student features into the same content space:
\begin{equation}
H'_{\text{ANN}} = H_{\text{ANN}},\qquad
H'_{\text{SNN}} = \mathrm{LayerNorm}\!\left(\mathrm{MLP}\!\left(\frac{1}{T}\sum_{t=1}^{T} H_{\text{SNN}}(t)\right)\right),
\label{eq:feat_map}
\end{equation}
where the MLP (or linear projection) matches dimensionality ($d_S\rightarrow d_T$) and LayerNorm stabilizes scale.
When teacher and student have different depths, we align layers using uniform skipping (e.g., every $\lceil B/M\rceil$ layers when $B>M$).
The feature loss combines rate-based and temporally fused MSE:
\begin{equation}
\bar H_{\text{SNN}}=\frac{1}{T}\sum_{t=1}^{T}H_{\text{SNN}}(t),\qquad
\mathcal{L}^{\text{MSE}}_{\text{feat}} = \left\|H'_{\text{ANN}}-H'_{\text{SNN}}\right\|_F^2,
\qquad
\mathcal{L}^{\text{RateMSE}}_{\text{feat}} = \left\|g(H_{\text{ANN}})-\bar H_{\text{SNN}}\right\|_F^2,
\end{equation}
and
\begin{equation}
\mathcal{L}_{\text{feat}} = \gamma_{\text{feat}} \mathcal{L}^{\text{RateMSE}}_{\text{feat}} + (1-\gamma_{\text{feat}})\, \mathcal{L}^{\text{MSE}}_{\text{feat}}.
\end{equation}

\subsection{Soft-Target Alignment (STA)}
\label{appendix:SpAD_soft}

We distill output distributions using softened logits:
\begin{equation}
\mathcal{L}_{\text{soft}}
=
\tau^2\,\mathrm{KL}\!\left(
\mathrm{softmax}\!\left(\frac{z_{\text{ANN}}}{\tau}\right)\ \Big\|\ 
\mathrm{softmax}\!\left(\frac{z_{\text{SNN}}}{\tau}\right)
\right),
\end{equation}
where $z_{\text{ANN}}$ and $z_{\text{SNN}}$ are teacher/student logits and $\tau$ is the temperature.

\subsection{Hard-Target Alignment (HTA)}
\label{appendix:SpAD_hard}

We include standard cross-entropy with the ground-truth next token $y$:
\begin{equation}
\mathcal{L}_{\text{hard}}
=
\mathrm{CE}\!\left(\mathrm{softmax}(z_{\text{SNN}}),\, y\right).
\end{equation}

\subsection{Total Loss}
\label{appendix:SpAD_total}

\begin{equation}
\mathcal{L}_{\text{total}}
=
\lambda_1 \mathcal{L}_{\text{emb}}
+
\lambda_2 \mathcal{L}_{\text{attn}}
+
\lambda_3 \mathcal{L}_{\text{feat}}
+
\lambda_4 \mathcal{L}_{\text{soft}}
+
\lambda_5 \mathcal{L}_{\text{hard}}.
\end{equation}
Each $\lambda_i$ balances the corresponding objective. The coefficients $\{\lambda_i\}_{i=1}^{5}$ are listed in Table~\ref{tab:training_hyperparams} and satisfy $\sum_{i=1}^{5}\lambda_i=1$ for a stable loss scale; we set them based on preliminary sweeps, with larger weights on distillation terms improving convergence and accuracy.

\section{Training Details}
\label{appendix:training}

\begin{table}[h]
\caption{Summary of training hyperparameters and configurations used for BiSpikCLM, including optimization settings, distillation parameters, and hardware specifications.}
\centering
\footnotesize
\label{tab:training_hyperparams}
\begin{tabular}{ll}
\toprule
\textbf{Hyperparameter}         & \textbf{Value} \\
\midrule
Teacher ANN model                   & OPT-family \\
Student SNN model                   & BiSpikCLM \\
Tokenizer / Vocabulary          & Aligned with OPT \\
Batch size (per GPU)            & 16 \\
Gradient accumulation steps     & 16 \\
Effective batch size            & 256 \\
Optimizer                       & Adam \\
Learning rate                   & $5 \times 10^{-4}$ \\
Scheduler                       & Cosine decay \\
Warm-up ratio                   & 0.2 \\
Gradient clipping threshold     & 0.7 \\
Temperature $\tau$ (for SpAD)    & 2.0 \\
Distillation weights ($\lambda_1$ to $\lambda_5$) & 0.2, 0.1, 0.1, 0.3, 0.3 \\
Inference time steps ($T$)      & 2 and 4 \\
Hardware                        & $8 \times$ NVIDIA RTX 4090 (24GB) \\
\bottomrule
\end{tabular}
\end{table}

\paragraph{Training Paradigm}
The training of \textbf{BiSpikCLM} follows a teacher–student paradigm, where the teacher model is a pre-trained open-source ANN-based large language model from the OPT family, and the student is our spike-based BiSpikCLM. To ensure consistency between the teacher and student models, we align both the vocabulary and tokenizer with those used in OPT.

\paragraph{Optimization Setup} 
All experiments are conducted using a batch size of 16 and a gradient accumulation factor of 16, effectively yielding a total batch size of 256 tokens. The optimizer used is Adam, with a learning rate set to $5 \times 10^{-4}$. A cosine learning rate scheduler with a warm-up ratio of 0.2 is employed to stabilize early training. Gradient clipping is applied with a threshold of 0.7 to avoid exploding gradients in the early training phases, which can be particularly pronounced in spiking models. All models are trained on 8 NVIDIA RTX 4090 GPUs (24GB each).

\paragraph{Spike-Aware Alignment Distillation} 
For Spike-Aware Alignment Distillation (SpAD), we adopt a temperature of $\tau = 2.0$ for the teacher’s soft targets. The overall loss is computed as a weighted combination of multiple alignment objectives defined in Method Section. The corresponding loss weights are: $\lambda_1 = 0.2$, $\lambda_2 = 0.1$, $\lambda_3 = 0.1$, $\lambda_4 = 0.3$, and $\lambda_5 = 0.3$. These coefficients are summarized in Table~\ref{tab:training_hyperparams}. We set $\sum_{i=1}^{5}\lambda_i = 1$ to keep the overall loss scale stable across experiments. The weights are selected based on preliminary sweeps: placing more emphasis on token-level attention/output alignment improves convergence and accuracy, whereas assigning too much weight to auxiliary terms can hurt training stability.

\paragraph{Inference Time Steps} 
To investigate the trade-off between accuracy and energy efficiency, BiSpikCLM is trained with varying numbers of inference time steps, specifically $T=2$ and $T=4$. A higher number of steps improves temporal resolution and accuracy at the cost of increased energy consumption, enabling flexible deployment depending on the application constraints.

\paragraph{Training Data Selection} 
The subset of training data used in our experiments was drawn from the fineweb-edu dataset~\citep{penedo2024fineweb}, specifically the 10BT sample accessible at \url{https://huggingface.co/datasets/HuggingFaceFW/fineweb-edu/tree/main/sample/10BT}. Since our claim of using a lower training token volume is a core contribution, it is crucial to detail how the data was sampled to ensure reproducibility. This subset was selected directly from the publicly available sample without additional filtering or preprocessing, providing other researchers with a clear and reproducible training set.

\section{Surrogate Gradient and Backpropagation Through Time in SNNs}
\label{appendix:surrogate}

\paragraph{Surrogate gradient.}
Training spiking neural networks (SNNs) presents a significant challenge due to the non-differentiable nature of spike generation functions, such as the Heaviside step function used in the spiking neuron model. To enable end-to-end optimization with backpropagation, we adopt a surrogate gradient approach introduced by ~\citet{fang2023spikingjelly}.

Specifically, the discrete spiking activation $S$ is approximated by a continuous and differentiable function using an arctangent-based surrogate:
\begin{equation}
S(U) \approx \frac{1}{\pi}\arctan\!\left(\frac{\pi}{2}\alpha U\right) + \frac{1}{2},
\label{eq:surrogate}
\end{equation}
where $U$ is the membrane potential and $\alpha$ is a tunable hyperparameter controlling the sharpness of the transition. In our experiments, we set $\alpha = 2$ by default, balancing gradient magnitude and smoothness.

Taking the derivative of Equation~\eqref{eq:surrogate} yields the surrogate gradient used during backpropagation:
\begin{equation}
\frac{\partial S}{\partial U}
= \frac{\alpha}{2}\cdot\frac{1}{1 + \left(\frac{\pi}{2}\alpha U\right)^2}.
\label{eq:grad}
\end{equation}
This surrogate formulation enables stable and effective gradient-based optimization for BiSpikCLM. It allows error signals to be backpropagated through spike-generating layers without requiring exact gradients, thus making the training pipeline compatible with standard deep learning frameworks.

\paragraph{Boundedness.}
A useful property of the arctangent surrogate in Eq.~\eqref{eq:grad} is that the gradient is bounded:
\begin{equation}
0 \le \frac{\partial S}{\partial U} \le \frac{\alpha}{2}.
\label{eq:surrogate_bound}
\end{equation}
This bounded surrogate derivative mitigates gradient explosion originating from spike nonlinearity and contributes to stable optimization in practice.

\subsection{Backpropagation Through Time}
\label{appendix:bptt_snn}

We further provide the gradient derivation of spiking neurons trained with backpropagation through time (BPTT), following the standard formulation in spiking deep networks.
Let the total loss be a sum of per time step losses (e.g., the losses defined in ~\eqref{Loss} in the main paper):
\begin{equation}
\mathcal{L} = \sum_{t=1}^{T} \mathcal{L}_t.
\end{equation}
Since the synaptic weights are shared across time, the global gradient is
\begin{equation}
\frac{\partial \mathcal{L}}{\partial W} = \sum_{t=1}^{T} \frac{\partial \mathcal{L}_t}{\partial W}.
\label{eq:global_grad}
\end{equation}

\paragraph{Cross-time dependency.}
Although~\eqref{eq:global_grad} sums per time step contributions, each $\mathcal{L}_t$ depends on the shared weight $W$
\emph{both directly} through the instantaneous synaptic term $W X_t$ and \emph{indirectly} through the recurrent state
propagation $U_{t}\!\leftarrow\!U_{t-1}\!\leftarrow\!\cdots$.
To make this dependence explicit, we can write
\begin{equation}
\frac{\partial \mathcal{L}}{\partial W}
= \sum_{t=1}^{T} \sum_{j=1}^{t} \frac{\partial \mathcal{L}_t}{\partial W_j}\frac{\partial W_j}{\partial W},
\label{eq:cross_time_sum}
\end{equation}
where $W_j$ denotes the (conceptual) copy of the shared parameter applied at time step $j$.
Since weights are shared across time, $W_1=\cdots=W_T=W$, hence $\partial W_j/\partial W = 1$ and
\begin{equation}
\frac{\partial \mathcal{L}}{\partial W}
= \sum_{t=1}^{T} \sum_{j=1}^{t} \frac{\partial \mathcal{L}_t}{\partial W_j}.
\label{eq:cross_time_shared}
\end{equation}
The inner sum captures the cross-time influence of $W$ on $\mathcal{L}_t$ via all earlier states, which will be compactly represented by the eligibility trace recursion below.

\paragraph{Neuron dynamics.}
Consider a discrete-time spiking neuron with membrane potential $U_t$ and spike output $S_t$.
A commonly used update takes the form
\begin{equation}
U_t = \beta U_{t-1} + W X_t,
\label{eq:mem_update_simple}
\end{equation}
where $X_t$ denotes the pre-synaptic input at time step $t$, and $\beta \in (0,1)$ is the decay factor.
(If a reset term is used, see the remark below.)
The spike is generated by a thresholding function $S_t = \mathbf{1}[U_t > \theta]$, whose derivative is approximated by the surrogate gradient in Eq.~\eqref{eq:grad}.

\paragraph{Local error term.}
We derive the gradient of the per time step loss with respect to the shared synaptic weights.
By the chain rule, for each time step $t$,
\begin{equation}
\frac{\partial \mathcal{L}_t}{\partial W}
=
\frac{\partial \mathcal{L}_t}{\partial S_t}
\cdot
\frac{\partial S_t}{\partial U_t}
\cdot
\frac{\partial U_t}{\partial W}.
\label{eq:chain_rule_full}
\end{equation}
The spiking nonlinearity operates independently on each neuron; therefore,
its Jacobian $\frac{\partial S_t}{\partial U_t}$ is diagonal.
In practice, this Jacobian-vector product reduces to an element-wise (Hadamard) product.
Accordingly, we define the local error term as
\begin{equation}
\delta_t
=
\frac{\partial \mathcal{L}_t}{\partial S_t}
\odot
\frac{\partial S_t}{\partial U_t},
\label{eq:delta_t}
\end{equation}
where $\odot$ denotes element-wise multiplication and
$\frac{\partial S_t}{\partial U_t}$ is given by the surrogate gradient in Eq.~\eqref{eq:grad}.

With this definition, the gradient with respect to the shared weights can be compactly written as
\begin{equation}
\frac{\partial \mathcal{L}}{\partial W}
=
\sum_{t=1}^{T}
\delta_t \cdot \frac{\partial U_t}{\partial W}.
\label{eq:grad_sum_delta}
\end{equation}

\paragraph{Eligibility trace recursion.}
A key step in BPTT for SNNs is that $\frac{\partial U_t}{\partial W}$ admits an efficient recurrence.
From Eq.~\eqref{eq:mem_update_simple}, we have
\begin{equation}
\frac{\partial U_t}{\partial W} = X_t + \beta \frac{\partial U_{t-1}}{\partial W}.
\end{equation}
Define the eligibility trace $e_t = \frac{\partial U_t}{\partial W}$ with $e_0=0$.
Then
\begin{equation}
e_t = X_t + \beta e_{t-1},
\qquad
\frac{\partial \mathcal{L}}{\partial W} = \sum_{t=1}^{T} \delta_t\, e_t .
\label{eq:eligibility_trace}
\end{equation}
This form makes explicit that the gradient is the accumulation over time of a local error term $\delta_t$ weighted by the eligibility trace $e_t$.

\subsection{Stability of the eligibility trace}
\label{appendix:etrace_bound}

We analyze the stability of the eligibility trace in Eq.~\eqref{eq:eligibility_trace}.
Consider the recurrence
\begin{equation}
e_t = X_t + \beta e_{t-1}, \qquad t=1,\ldots,T,
\label{eq:etrace_rec}
\end{equation}
where $\beta\in(0,1)$ and $e_0=0$.
Let $\|\cdot\|$ denote any norm that satisfies (i) the triangle inequality and (ii) positive homogeneity
(e.g., $\ell_2$ norm for vectors or Frobenius norm for matrices).
Assume the pre-synaptic input is uniformly bounded, i.e., $\|X_t\|\le M$ for all $t$.

\begin{lemma}[Uniform bound on $e_t$]
\label{lem:etrace_bound}
Under the above assumptions and $\beta\in(0,1)$, the eligibility trace satisfies
\begin{equation}
\|e_t\| \le \frac{M}{1-\beta}, \qquad \forall t\ge 1.
\label{eq:e_bound_icml}
\end{equation}
\end{lemma}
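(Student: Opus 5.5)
The plan is to unroll the recurrence in Eq.~\eqref{eq:etrace_rec} into a closed form, bound it with the triangle inequality and homogeneity, and then sum a geometric series.

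\textbf{Step 1: closed form.} Starting from $e_0=0$ and substituting repeatedly, we will show by induction on $t$ that
\begin{equation*}
e_t=\sum_{k=0}^{t-1}\beta^{k}X_{t-k}, \qquad t\ge 1.
\end{equation*}
The base case $t=1$ gives $e_1=X_1$. For the inductive step, assume the formula holds for $e_{t-1}$. Then
\begin{equation*}
e_t=X_t+\beta\sum_{k=0}^{t-2}\beta^kX_{t-1-k}=X_t+\sum_{k=1}^{t-1}\beta^{k}X_{t-k},
\end{equation*}
which is the claimed expression.

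\textbf{Step 2: norm bound.} Applying the triangle inequality (i) and positive homogeneity (ii), with $\beta^k\ge 0$ since $\beta\in(0,1)$, together with $\|X_{t-k}\|\le M$, gives
\begin{equation*}
\|e_t\|\le\sum_{k=0}^{t-1}\beta^k\|X_{t-k}\|\le M\sum_{k=0}^{t-1}\beta^k=M\,\frac{1-\beta^t}{1-\beta}\le\frac{M}{1-\beta}.
\end{equation*}
The last inequality uses $0<\beta^t<1$. The resulting bound does not depend on $t$, which is the uniform statement of Lemma~\ref{lem:etrace_bound}.

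\textbf{Alternative route.} One can skip the closed form and argue by induction directly on the bound. Suppose $\|e_{t-1}\|\le M/(1-\beta)$. Then
\begin{equation*}
\|e_t\|\le M+\beta\,\frac{M}{1-\beta}=\frac{M}{1-\beta},
\end{equation*}
and the base case holds because $\|e_0\|=0$. This is the cleanest version to write out. There is no real obstacle here; the only care needed is to use homogeneity for the nonnegative scalar $\beta$ and to note that the bound holds for every $t$, not only as $t\to\infty$.
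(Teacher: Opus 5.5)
Your proposal is correct and follows the paper's proof essentially verbatim: unroll the recurrence to $e_t=\sum_{k=0}^{t-1}\beta^k X_{t-k}$, apply the triangle inequality and positive homogeneity, and bound by $M\sum_{k=0}^{t-1}\beta^k = M(1-\beta^t)/(1-\beta)\le M/(1-\beta)$. Your inductive check of the closed form, which the paper simply asserts, and your direct induction on the bound starting from $\|e_0\|=0$ are both valid.
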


\begin{proof}
Unrolling the recursion in~\eqref{eq:eligibility_trace} gives
\begin{equation}
e_t = \sum_{k=0}^{t-1}\beta^k X_{t-k}.
\label{eq:etrace_unroll}
\end{equation}
Taking norms on both sides and applying the triangle inequality gives
\begin{equation}
\|e_t\| = \left\|\sum_{k=0}^{t-1}\beta^k X_{t-k}\right\|
\le \sum_{k=0}^{t-1}\left\|\beta^k X_{t-k}\right\|.
\end{equation}
By positive homogeneity of the norm and $\beta^k\ge 0$, we have
\begin{equation}
\sum_{k=0}^{t-1}\left\|\beta^k X_{t-k}\right\|
= \sum_{k=0}^{t-1}\beta^k \|X_{t-k}\|
\le M\sum_{k=0}^{t-1}\beta^k,
\end{equation}
where the last inequality uses the bounded-input assumption $\|X_{t-k}\|\le M$.
Finally, since $\beta\in(0,1)$, the geometric series satisfies
\begin{equation}
\sum_{k=0}^{t-1}\beta^k = \frac{1-\beta^t}{1-\beta} \le \frac{1}{1-\beta},
\end{equation}
which concludes the proof of Eq.~\eqref{eq:e_bound_icml}.
\end{proof}

\paragraph{Implication for gradient stability.}
Using~\eqref{eq:surrogate_bound}, we have $\|\partial S_t/\partial U_t\|_\infty \le \alpha/2$.
For any vector norm compatible with element-wise multiplication,
\begin{equation}
\|\delta_t\|
= \left\|\frac{\partial \mathcal{L}_t}{\partial S_t}\odot\frac{\partial S_t}{\partial U_t}\right\|
\le \left\|\frac{\partial \mathcal{L}_t}{\partial S_t}\right\|\left\|\frac{\partial S_t}{\partial U_t}\right\|_\infty
\le \frac{\alpha}{2}\left\|\frac{\partial \mathcal{L}_t}{\partial S_t}\right\|.
\end{equation}
Combining this with Lemma~\ref{lem:etrace_bound} yields the bound
\begin{equation}
\|\delta_t e_t\|
\le \|\delta_t\|\,\|e_t\|
\le \frac{\alpha M}{2(1-\beta)}
\left\|\frac{\partial \mathcal{L}_t}{\partial S_t}\right\|.
\label{eq:grad_term_bound}
\end{equation}
Therefore, under bounded inputs, the temporal accumulation through the eligibility trace does not introduce gradient explosion by itself; its magnitude is controlled by $\beta$ and the surrogate slope parameter $\alpha$.

\paragraph{Remark (with reset).}
If a soft reset is used, e.g.,
\begin{equation}
U_t=\beta U_{t-1}(1-S_{t-1}) + W X_t,
\end{equation}
then the eligibility trace obeys
\begin{equation}
e_t = X_t + \beta(1-S_{t-1})e_{t-1}.
\end{equation}
Since $0\le 1-S_{t-1}\le 1$, the effective contraction factor is at most $\beta$ at each time step, yielding an equal or tighter bound than Lemma~\ref{lem:etrace_bound}.

\clearpage
\section{Theoretical Synaptic Operation and Energy Consumption Calculation}
\label{appendix:energy}
The theoretical energy consumption of \textit{BiSpikCLM} is estimated by first calculating the synaptic operations (SOPs). For each block or layer $l$, we have:
\begin{equation}
    \text{SOPs}(l) = f_r(l) \times T \times \text{FLOPs}(l),
    \label{eq:sops}
\end{equation}
where $l$ indexes a block in \textit{BiSpikCLM}, $f_r(l)$ is the average firing rate of the input spike train to block $l$ (measured as spikes per neuron per time step), and $T$ is the simulation time steps of the spiking neuron. $\text{FLOPs}(l)$ denotes the number of multiply-and-accumulate (MAC) operations of block $l$ in the equivalent ANN. $\text{SOPs}(l)$ thus represents the spike-based accumulate (AC) operations performed in the SNN.

Following \cite{horowitz2014energy}, we assume the energy per operation on a 45\,nm process as
\[
    E_{\text{MAC}} = 4.6 \,\text{pJ}, \qquad E_{\text{AC}} = 0.9 \,\text{pJ}.
\]

For ANNs, the theoretical energy consumption of a block $b$ is
\begin{equation}
    \text{Power}_{\text{ANN}}(b) = E_{\text{MAC}} \times \text{FLOPs}(b).
    \label{eq:annpower}
\end{equation}

For SNNs, the theoretical energy consumption of block $b$ is
\begin{equation}
    \text{Power}_{\text{SNN}}(b) = E_{\text{AC}} \times \text{SOPs}(b).
    \label{eq:snnpower}
\end{equation}

According to (~\citep{horowitz2014energy, kundu2021spike,kundu2021hire,hu2021advancing,yin2021accurate,kim2021optimizing,yao2021temporal}), the total energy consumption of \textit{BiSpikCLM} can be decomposed into three parts: (1) the embedding stage, which is executed with dense MAC operations, (2) the $L$ stacked transformer blocks, each of which is spiking and therefore counted using AC operations, and (3) the language-model head (LM-head) that maps hidden states to vocabulary logits (dense MACs). We write:
\begin{align}
    E_{\text{BiSpikCLM}} &= 
    E_{\text{MAC}} \cdot \big(\text{FLOPs}_{\text{ Embed}} + \text{FLOPs}_{\text{ LM-head}}\big) \notag \\
    &\quad + E_{\text{AC}} \cdot \sum_{l=1}^{L}
    \Big( \text{SOP}_{\text{SFSA}}(l) + \text{SOP}_{\text{SFFN}}(l) \Big)
    \label{eq:total_energy_with_lmhead}
\end{align}

where $\text{FLOPs}_{\text{ Embed}}$ and $\text{FLOPs}_{\text{ LM-head}}$ denote the MAC operations of the embedding stage and the output projection to vocabulary logits, respectively; $\text{SOP}_{\text{SFSA}}(l)$ and $\text{SOP}_{\text{SFFN}}(l)$ represent the spike-accumulate operations of the Spiking Causal Self-Attention and Spiking Feed-Forward Network modules in block $l$; $E_{\text{MAC}}$ and $E_{\text{AC}}$ are the energy costs per MAC and AC operation; $L$ is the number of transformer blocks; and $f_r$ and $T$ denote the average firing rate and the number of simulation time steps.

\clearpage
\section{Comparison with SpikeLLM}
\label{appendix:spikellm}

\begin{table*}[t]
\caption{Comparison of BiSpikCLM and SpikeLLM across different model scales, neuron/spike formats, and time steps. Avg. Acc. reports zero-shot accuracy (\%), and SNN/ANN Ratio shows the performance of spiking models relative to their ANN counterparts.}
\centering
\footnotesize
\label{tab:comparison_spikellm}
\setlength{\tabcolsep}{2.25pt} 
\renewcommand{\arraystretch}{1.05} 
\begin{tabular}{l | c c c c c c c c | c c}
\toprule
\textbf{Model} & 
{ \textbf{\makecell{Params (B)}}} & 
{ \textbf{\makecell{Tokens (B)}}} & 
{ \textbf{\makecell{Spike Form}}} & 
{ \textbf{\makecell{Time Step}}} & 
{\textbf{\makecell{Avg. Acc.(\%) $\uparrow$}}} &
{\textbf{\makecell{SNN/ANN \\ Ratio (\%) $\uparrow$}}} \\
\midrule
BiSpikCLM  & 0.125 & \textbf{1.0}  & Binary & 2 & 36.05 & 93.39 \\
TriSpikCLM  & 0.125 & \textbf{1.0}  & Ternary & 2 & 36.11 & 93.55 \\
BiSpikCLM  & 0.125 & \textbf{1.0}  & Binary & 4 & 36.50 & 94.56 \\
TriSpikCLM  & 0.125 & \textbf{1.0}  & Ternary & 4 & 36.27 & 93.96 \\
\midrule
BiSpikCLM  & 0.350 & \textbf{2.0}  & Binary& 2 & 38.48 & 90.16 \\
TriSpikCLM  & 0.350 & \textbf{2.0}  & Ternary & 2 & 38.14 & 89.36 \\
BiSpikCLM  & 0.350 & \textbf{2.0}  & Binary & 4 & 38.84 & 91.00 \\
TriSpikCLM  & 0.350 & \textbf{2.0}  & Ternary & 4 & 38.87 & 91.07 \\
\midrule
BiSpikCLM  & 1.300 & \textbf{10.0}  & Binary & 2 & 41.33 & 83.11 \\
TriSpikCLM  & 1.300 & \textbf{10.0}  & Ternary & 2 & 41.08 & 82.60 \\
BiSpikCLM  & 1.300 & \textbf{10.0}  & Binary & 4 & 42.19 & 84.84 \\
TriSpikCLM  & 1.300 & \textbf{10.0}  & Ternary & 4 & 42.12 & 84.70 \\
\midrule
SpikeLLM & 7.000 & — & Integer (W2A16) & 2 &  49.92 & 78.17 \\
SpikeLLM & 7.000 & — & Integer  (W2A8) & 4 &  41.77 & 65.41   \\
SpikeLLM & 13.00 & — & Integer (W2A16) & 2 &  53.76 & 81.34 \\
SpikeLLM & 13.00 & — & Integer  (W2A8) & 4 &  50.12 & 75.78 \\
SpikeLLM & 70.00 & — & Integer (W2A16) & 2 &  60.47 & 82.55 \\
\bottomrule
\end{tabular}
\end{table*}

Table~\ref{tab:comparison_spikellm} compares BiSpikCLM with SpikeLLM across different model scales, quantization methods, spike forms, and simulation time steps. Overall, BiSpikCLM achieves competitive zero-shot accuracy with smaller models and lower-precision spike forms. Notably, the SNN/ANN ratio of BiSpikCLM is consistently higher (82–95\%) than that of SpikeLLM, indicating that our spike-based models retain more of the original ANN performance. This improvement is largely attributed to our spike-aware knowledge distillation framework, which effectively transfers information from ANN teachers to spiking students. These results not only highlight the efficiency and effectiveness of our approach but also provide a promising pathway for further improving SNN-LLMs through advanced distillation strategies.

\section{Weight Visualization}
\label{appendix:visual_weight}

In this subsection, we analyze the weight distributions of Artificial Neural Networks (ANNs) and Spiking Neural Networks (SNNs) across different layers and components. The weight visualization provides valuable insights into the fundamental differences between these two types of neural networks and highlights the unique characteristics of SNNs. The weight distributions of ANNs and SNNs exhibit distinct patterns across various layers and components (\texttt{q\_proj}, \texttt{k\_proj}, \texttt{v\_proj}, \texttt{out\_proj}, \texttt{fc1}, \texttt{fc2}).

Our approach differs from traditional ANN-to-SNN conversion methods in that we do not passively fit SNN weights to match those of ANNs. Instead, we actively capture and adapt to the unique characteristics of SNNs. This active adaptation is crucial for leveraging the full potential of SNNs, which operate on spiking dynamics rather than continuous activation values. By focusing on the inherent properties of SNNs, such as their broader weight distribution and dynamic spiking behavior, our method ensures that the network is optimized for spiking neural computation. This approach allows SNNs to maintain their distinct advantages, such as energy efficiency and biological plausibility, while still achieving high performance.

In summary, the weight visualization clearly demonstrates the differences between ANNs and SNNs. Our method capitalizes on these differences by actively adapting to the unique characteristics of SNNs, rather than forcing them to conform to the weight and activation patterns of ANNs. This approach is essential for developing effective and efficient SNNs that can fully leverage their spiking dynamics.

\begin{figure}[H]
    \centering
    \includegraphics[width=0.95\textwidth]{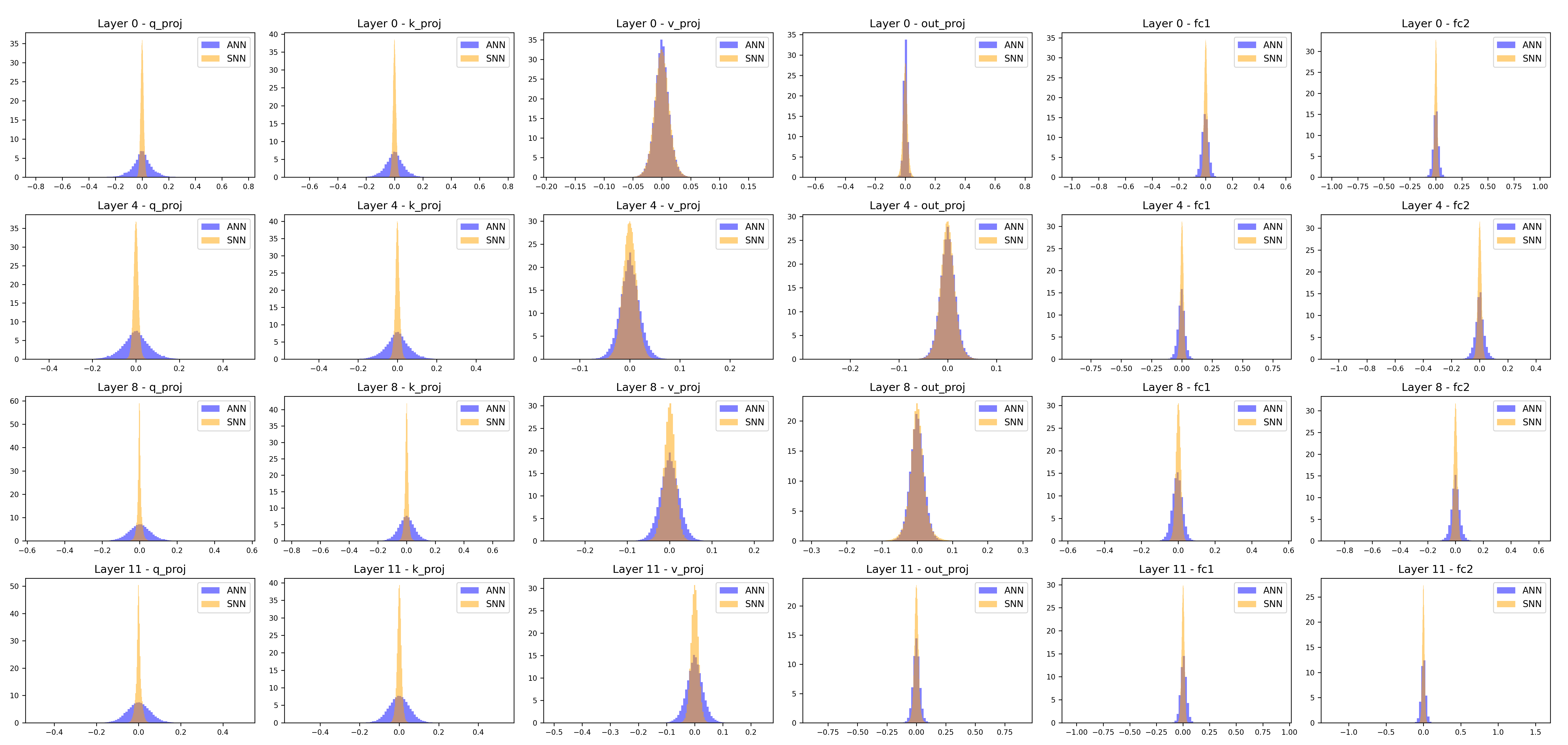}
    \caption{Weight distribution comparison between Artificial Neural Networks (ANNs) and Spiking Neural Networks (SNNs). 
ANNs typically exhibit a more concentrated weight distribution around zero, especially in early layers (e.g., Layer 0). 
In deeper layers (e.g., Layer 11), their weight distribution becomes slightly more spread out but remains relatively compact, indicating tightly clustered weights that contribute to stability and ease of training. 
In contrast, SNNs show a broader and more dispersed weight distribution, with weights less tightly clustered around zero. 
This broader spread is particularly notable in deeper layers (e.g., Layer 11), reflecting the dynamic and diverse weight updates characteristic of their spiking nature.
}
    \label{fig:weight}
\end{figure}

\section{Outlier Analysis of Weight Distributions}
\label{appendix:outlier_analysis}

To further investigate the differences between Artificial Neural Networks (ANNs) and Spiking Neural Networks (SNNs), we analyze the number of outliers in the weight distributions across various layers and components. Outliers are defined as weights that significantly deviate from the mean, potentially indicating instability or over-parameterization in specific components.

\subsection{Total Number of Outliers}

As summarized in Table~\ref{tab:outlier_summary}, ANNs exhibit significantly more outliers, with over three times as many compared to SNNs across all evaluated layers.

\begin{table}[H]
\caption{Total number of weight outliers across all layers and components.}
\centering
\label{tab:outlier_summary}
\begin{tabular}{lcc}
\toprule
\textbf{Model} & \textbf{Total Outliers} \\
\midrule
ANN & 18,663 \\
SNN & 5,834 \\
\bottomrule
\end{tabular}
\end{table}

\subsection{Layer-wise Comparison}
To gain deeper insights, we present a detailed layer-wise and component-wise comparison in Table~\ref{tab:ann_snn_weights}, showing the number of outliers for both models. \textbf{ANNs exhibit a significantly higher number of outliers}, especially in deeper fully connected layers (\texttt{fc1}, \texttt{fc2}), which may be due to larger weight magnitudes and higher variance. While \textbf{SNNs show fewer outliers overall}, reflecting their more compact and tightly regulated weight distributions. Interestingly, in some components (e.g., \texttt{q\_proj}, \texttt{k\_proj} at Layer 7), SNNs have more outliers than ANNs. This suggests local spikes in weight variability, possibly due to the intrinsic dynamics of spiking updates. The standard deviation of weights (not shown here) is consistently lower in SNNs, reinforcing the observation that they operate within a narrower, more stable range. These findings highlight a fundamental difference in the behavior of ANNs and SNNs: while ANNs may rely on larger weight magnitudes and are more prone to extreme values, SNNs exhibit smoother, biologically plausible weight distributions that reduce the risk of instability.

\begin{table}[H]
\caption{Comparison of weight statistics between ANN and SNN across various layers and components.}
\label{tab:ann_snn_weights}
\centering
\footnotesize
\begin{tabular}{cccccccc}
\toprule
\textbf{Layer} & \textbf{Component} & \textbf{Model} & \textbf{Mean} & \textbf{Std} & \textbf{Max} & \textbf{Min} & \textbf{Num\_Outliers} \\
\midrule
0  & q\_proj   & ANN & -3.15E-05 & 7.87E-02 & 0.77 & -0.80 & 596 \\
0  & q\_proj   & SNN & 1.30E-04  & 1.14E-02 & 0.06 & -0.06 & \textbf{6} \\
0  & k\_proj   & ANN & 6.12E-05  & 7.26E-02 & 0.77 & -0.70 & 1154 \\
0  & k\_proj   & SNN & -3.36E-05 & 1.10E-02 & 0.06 & -0.06 & \textbf{24} \\
0  & v\_proj   & ANN & -1.55E-05 & 1.28E-02 & 0.17 & -0.19 & 163 \\
0  & v\_proj   & SNN & 2.97E-05  & 1.31E-02 & 0.08 & -0.08 & \textbf{34} \\
0  & out\_proj & ANN & -8.64E-06 & 1.30E-02 & 0.78 & -0.62 & 702 \\
0  & out\_proj & SNN & -2.16E-04 & 1.58E-02 & 0.12 & -0.11 & \textbf{73} \\
0  & fc1       & ANN & -3.17E-03 & 2.93E-02 & 0.56 & -1.00 & 5657 \\
0  & fc1       & SNN & 4.58E-04  & 1.16E-02 & 0.06 & -0.06 & \textbf{2} \\
0  & fc2       & ANN & -1.23E-05 & 2.62E-02 & 1.00 & -1.00 & 716 \\
0  & fc2       & SNN & 2.74E-04  & 1.24E-02 & 0.06 & -0.11 & \textbf{189} \\
3  & q\_proj   & ANN & -2.93E-04 & 6.02E-02 & 0.56 & -0.48 & 105 \\
3  & q\_proj   & SNN & 1.98E-04  & 1.15E-02 & 0.07 & -0.10 & \textbf{57} \\
3  & k\_proj   & ANN & 1.19E-04  & 6.16E-02 & 0.45 & -0.46 & 116 \\
3  & k\_proj   & SNN & 1.34E-04  & 1.10E-02 & 0.10 & -0.10 & \textbf{199} \\
3  & v\_proj   & ANN & 1.71E-05  & 2.06E-02 & 0.23 & -0.27 & 60  \\
3  & v\_proj   & SNN & -2.36E-04 & 1.39E-02 & 0.09 & -0.09 & \textbf{63} \\
3  & out\_proj & ANN & 2.38E-06  & 1.74E-02 & 0.50 & -0.36 & 183 \\
3  & out\_proj & SNN & -2.75E-04 & 1.48E-02 & 0.10 & -0.10 & \textbf{79} \\
3  & fc1       & ANN & -1.81E-03 & 2.49E-02 & 0.86 & -0.73 & 2045 \\
3  & fc1       & SNN & 2.39E-05  & 1.30E-02 & 0.07 & -0.08 & \textbf{38} \\
3  & fc2       & ANN & -2.44E-06 & 2.67E-02 & 0.64 & -1.06 & 2053 \\
3  & fc2       & SNN & 4.87E-05  & 1.30E-02 & 0.08 & -0.10 & \textbf{137} \\
7  & q\_proj   & ANN & -1.80E-04 & 5.98E-02 & 0.53 & -0.54 & 147 \\
7  & q\_proj   & SNN & -4.13E-04 & 1.10E-02 & 0.13 & -0.13 & \textbf{1484} \\
7  & k\_proj   & ANN & -2.34E-05 & 6.03E-02 & 0.74 & -0.68 & 185 \\
7  & k\_proj   & SNN & 5.25E-04  & 1.35E-02 & 0.17 & -0.19 & \textbf{1004} \\
7  & v\_proj   & ANN & 1.22E-05  & 2.02E-02 & 0.13 & -0.14 & 20  \\
7  & v\_proj   & SNN & -2.11E-04 & 1.35E-02 & 0.12 & -0.12 & \textbf{207} \\
7  & out\_proj & ANN & 1.25E-05  & 1.76E-02 & 0.19 & -0.19 & 93  \\
7  & out\_proj & SNN & -1.91E-04 & 2.03E-02 & 0.19 & -0.17 & \textbf{609} \\
7  & fc1       & ANN & -4.40E-03 & 2.73E-02 & 0.59 & -0.59 & 315 \\
7  & fc1       & SNN & 6.75E-05  & 1.30E-02 & 0.15 & -0.13 & \textbf{224} \\
7  & fc2       & ANN & -3.44E-05 & 3.00E-02 & 0.38 & -1.00 & 1315 \\
7  & fc2       & SNN & -8.44E-05 & 1.24E-02 & 0.10 & -0.12 & \textbf{204} \\
11 & q\_proj   & ANN & -2.52E-04 & 5.57E-02 & 0.50 & -0.49 & 153 \\
11 & q\_proj   & SNN & -1.89E-04 & 1.19E-02 & 0.15 & -0.12 & \textbf{181} \\
11 & k\_proj   & ANN & 5.30E-05  & 5.44E-02 & 0.53 & -0.53 & 366 \\
11 & k\_proj   & SNN & 2.98E-04  & 1.17E-02 & 0.19 & -0.11 & \textbf{225} \\
11 & v\_proj   & ANN & 4.60E-05  & 2.95E-02 & 0.20 & -0.50 & 68  \\
11 & v\_proj   & SNN & -5.92E-05 & 1.35E-02 & 0.24 & -0.19 & \textbf{268} \\
11 & out\_proj & ANN & -2.00E-05 & 3.15E-02 & 0.89 & -0.88 & 478 \\
11 & out\_proj & SNN & -1.07E-05 & 1.96E-02 & 0.19 & -0.16 & \textbf{325} \\
11 & fc1       & ANN & 4.87E-03  & 2.82E-02 & 0.92 & -1.04 & 473 \\
11 & fc1       & SNN & -4.00E-05 & 1.37E-02 & 0.07 & -0.09 & \textbf{52} \\
11 & fc2       & ANN & 3.81E-05  & 3.23E-02 & 1.56 & -1.23 & 1500 \\
11 & fc2       & SNN & 2.15E-05  & 1.50E-02 & 0.12 & -0.11 & \textbf{150} \\
\bottomrule
\end{tabular}
\end{table}

\clearpage
\section{Firing Visualization}
\label{appendix:visual_firing}
\begin{figure}[!ht]
    \centering
    \begin{subfigure}{0.42\textwidth}
        \centering
        \includegraphics[width=\linewidth]{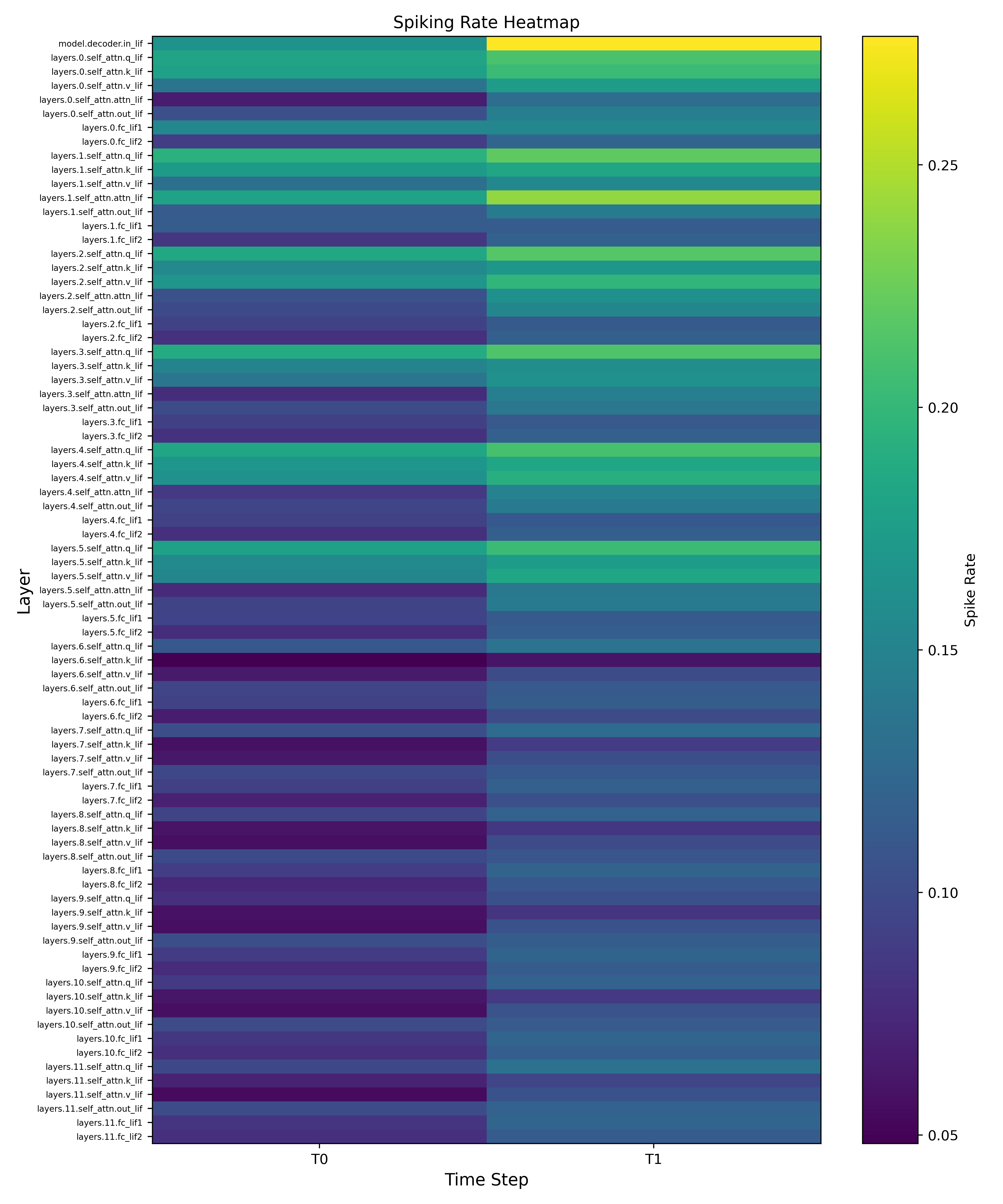}
        \caption{$T=2$}
        \label{fig:heatmap_t2}
    \end{subfigure}
    \hspace{0.02\textwidth}
    \begin{subfigure}{0.42\textwidth}
        \centering
        \includegraphics[width=\linewidth]{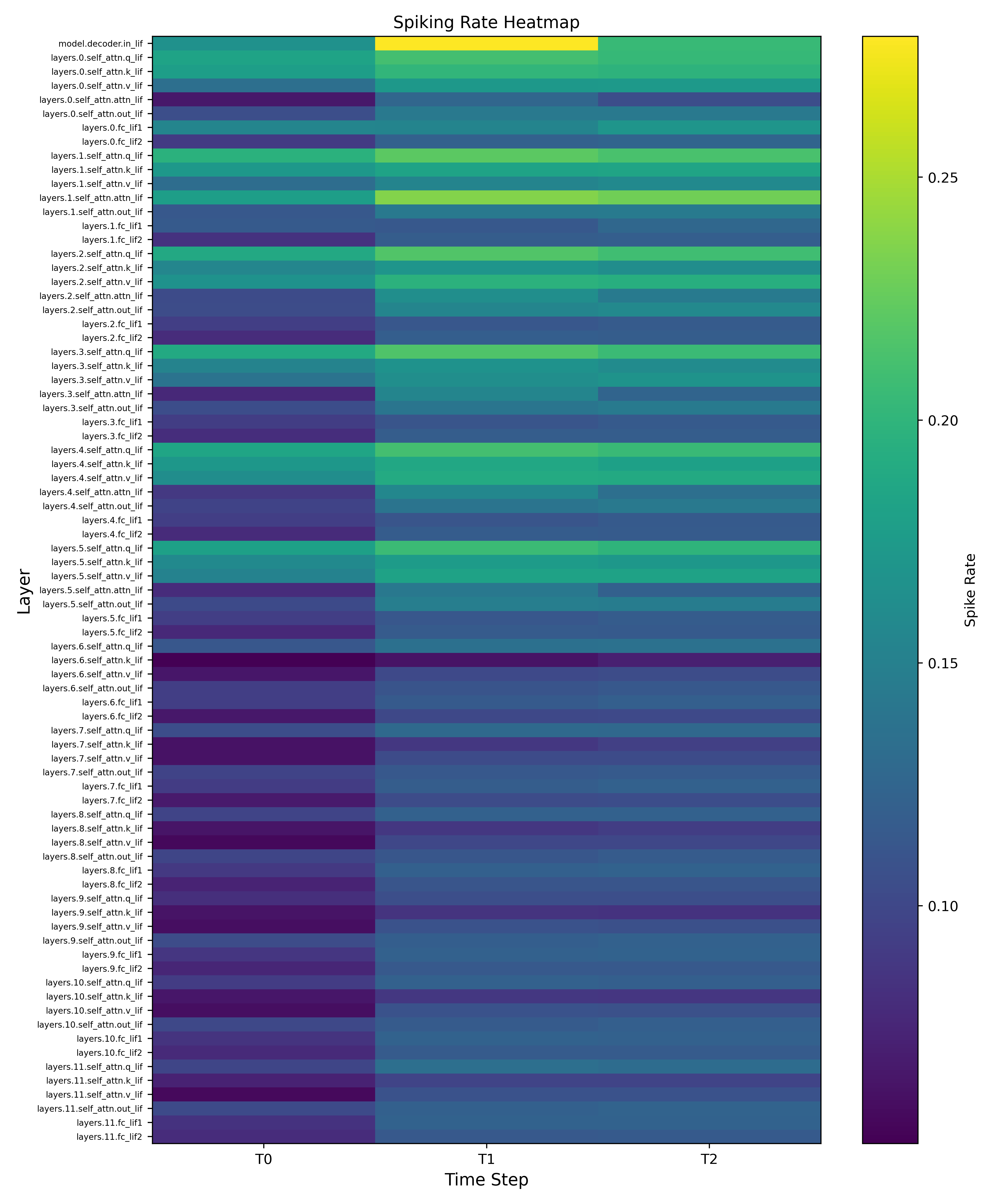}
        \caption{$T=3$}
        \label{fig:heatmap_t3}
    \end{subfigure}

    \vspace{0.3em} 

    \begin{subfigure}{0.42\textwidth}
        \centering
        \includegraphics[width=\linewidth]{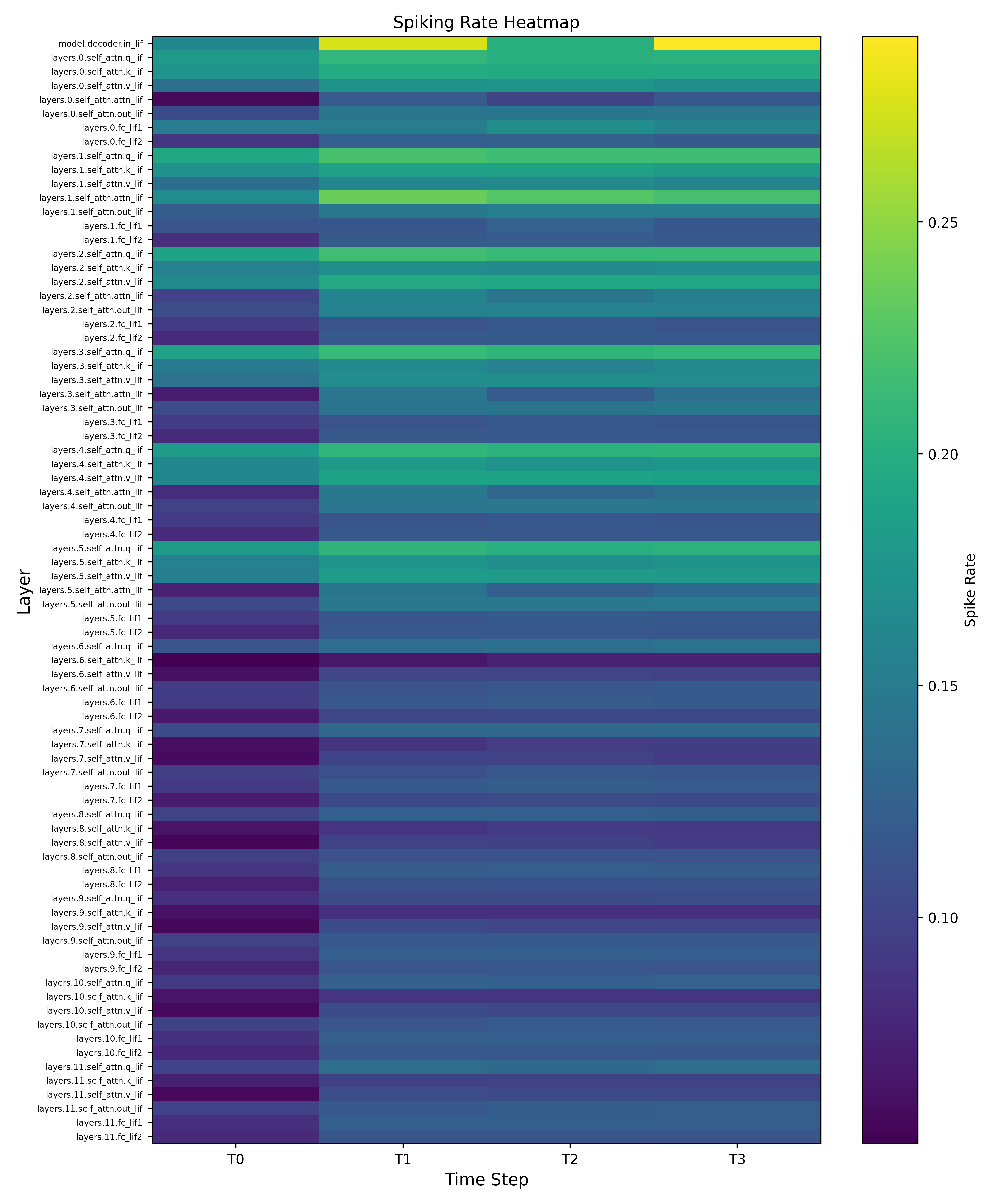}
        \caption{$T=4$}
        \label{fig:heatmap_t4}
    \end{subfigure}
    \hspace{0.02\textwidth}
    \begin{subfigure}{0.42\textwidth}
        \centering
        \includegraphics[width=\linewidth]{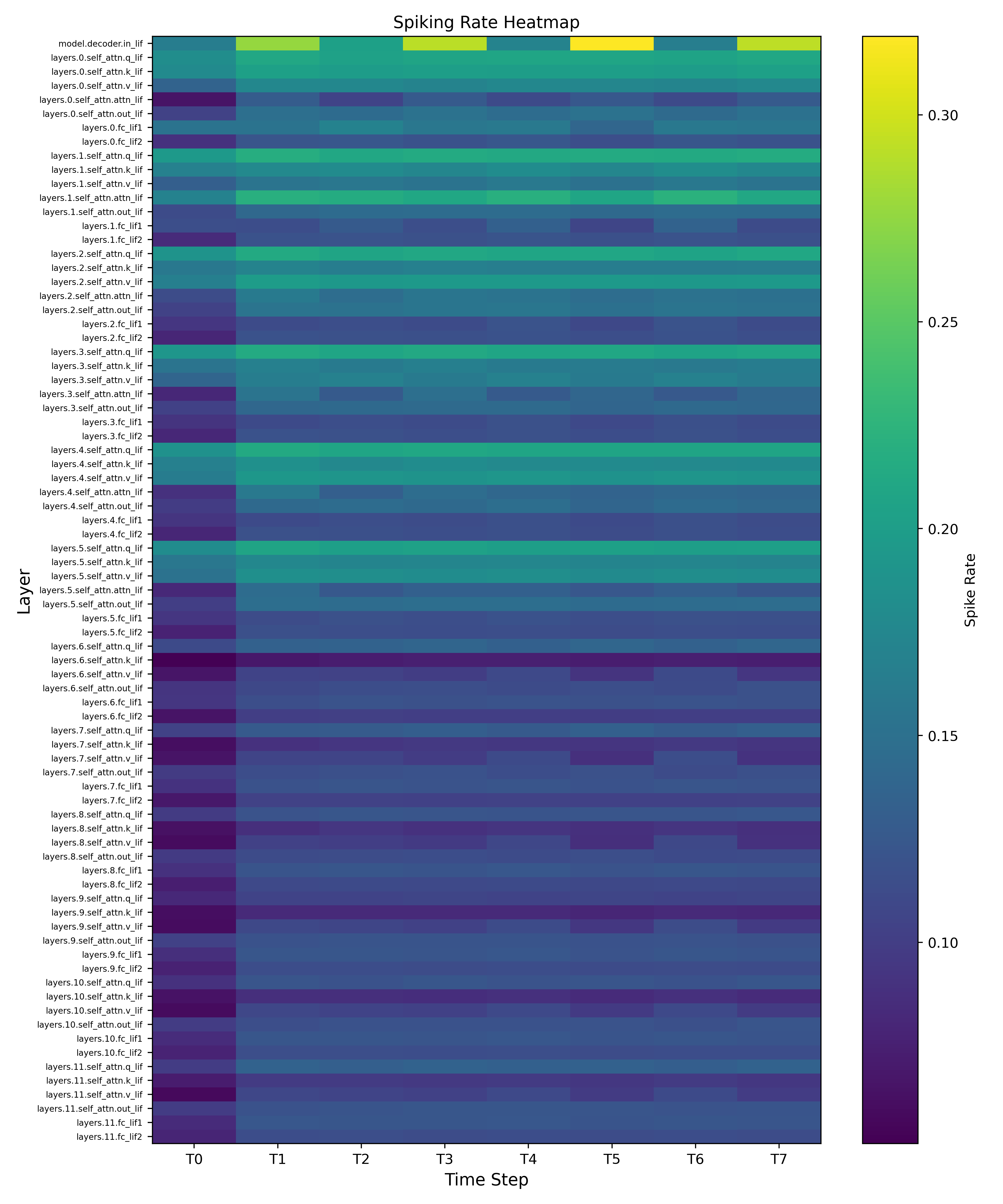}
        \caption{$T=8$}
        \label{fig:heatmap_t8}
    \end{subfigure}

    \caption{
Layer-wise firing rate heatmaps under different temporal resolutions.
(a) At $T=2$, firing activity is sparse and concentrated in a small subset of layers.
(b) At $T=3$, elevated firing rates begin to propagate across more layers, indicating increased temporal integration.
(c) With $T=4$, spiking activity becomes more distributed, engaging a larger portion of the network.
(d) At $T=8$, firing patterns are largely uniform across layers, suggesting a temporally saturated regime with broad layer participation.
    }
    \label{fig:heatmap_all}
\end{figure}

To investigate the temporal dynamics of spiking activity in \textbf{BiSpikCLM}, we visualize layer-wise firing rate distributions under different inference time steps, namely $T=2$, $T=3$, $T=4$, and $T=8$, as shown in Fig.~\ref{fig:heatmap_all}(a--d). In these heatmaps, the vertical axis corresponds to model layers, the horizontal axis represents discrete time steps, and the color intensity encodes the normalized firing rate from low (purple) to high (yellow).

As the number of time steps increases, the firing patterns exhibit a clear transition from sparse, layer-localized activation to more distributed and pervasive spiking activity. Under a limited temporal budget (e.g., $T=2$), high firing rates are concentrated in a small subset of layers, indicating that only essential components are activated for early-stage processing. With increased temporal resolution ($T=3$ and $T=4$), elevated firing activity gradually propagates across more layers, reflecting enhanced temporal integration and deeper hierarchical interactions. When the temporal budget is sufficiently large ($T=8$), the firing distribution becomes relatively uniform across most layers, suggesting a temporally saturated regime in which nearly all layers participate in the computation.

Overall, these observations indicate that \textbf{BiSpikCLM} dynamically adapts its internal computation to the available temporal budget, scaling from sparse and efficient activation to more expressive and distributed processing as additional time steps are provided. This temporal adaptability highlights the potential of spiking neural architectures for scalable and energy-aware language modeling.

\begin{figure}[H]
    \centering
    \includegraphics[width=0.8\textwidth]{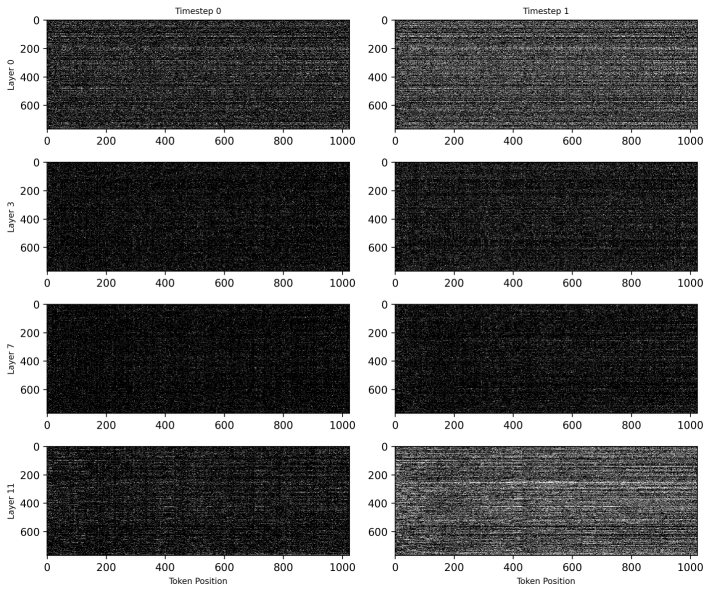}
    \caption{This figure presents the spiking activity across token positions for four representative layers (Layer 0, Layer 3, Layer 7, and Layer 11) at an early inference time step. The firing patterns are relatively sparse and uniformly distributed, particularly in the lower layers. This reflects the initial stage of neuronal processing, where the model begins encoding input signals with limited temporal context. Notably, deeper layers such as Layer 11 exhibit subdued activation, suggesting that higher-level abstractions have not yet emerged.}
    \label{fig:firing_2}
\end{figure}

While the layer-wise heatmaps reveal how spiking activity propagates across depth, the token-level visualizations in Figures~\ref{fig:firing_2}–\ref{fig:firing_8} further show how temporal integration shapes fine-grained, token-specific activation patterns. 

In addition to the layer-wise temporal spiking visualization, we further examine the firing patterns of \textbf{BiSpikCLM} at the level of individual token positions. Figures~\ref{fig:firing_2}–\ref{fig:firing_8} visualize token-wise spiking activity for selected layers (Layer 0, Layer 3, Layer 7, and Layer 11) under different inference time steps. In each figure, rows correspond to the selected layers, while columns represent discrete inference time steps. Within each subplot, the horizontal axis indicates token positions and the vertical axis indexes neurons (or channels) within the corresponding layer; color intensity encodes the firing magnitude, with lighter colors representing stronger activity.

\begin{figure}[H]
    \centering
    \includegraphics[width=0.8\textwidth]{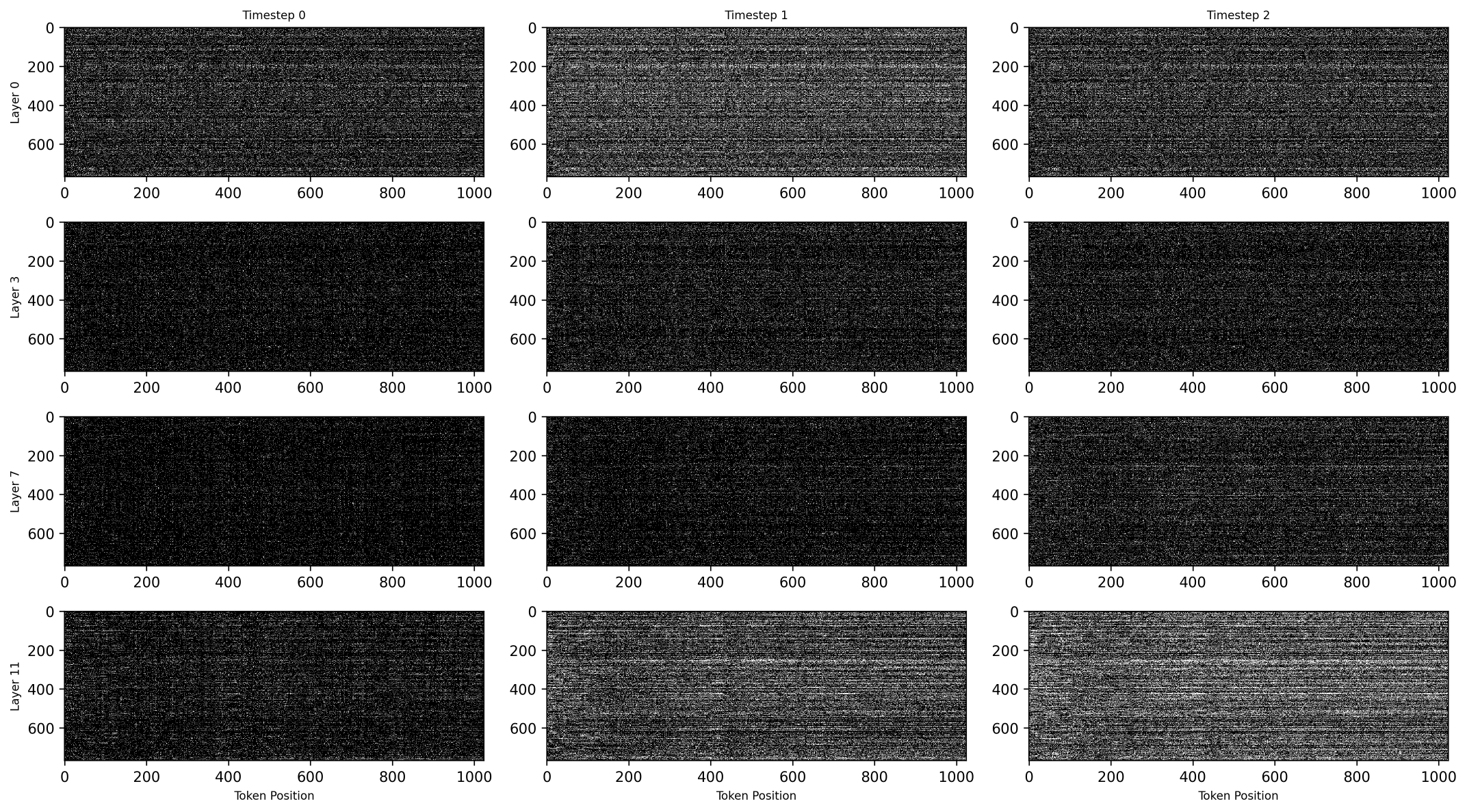}
    \caption{At $T=3$, the firing distributions become slightly more structured across token positions and layers. While lower layers maintain broadly distributed activity, deeper layers begin to display early signs of selective activation. Compared to $T=2$, this figure reveals the onset of temporal refinement, indicating that additional time steps allow the model to initiate more context-sensitive computation, particularly in the upper layers.}
    \label{fig:firing_3}
\end{figure}

\begin{figure}[H]
    \centering
    \includegraphics[width=0.8\textwidth]{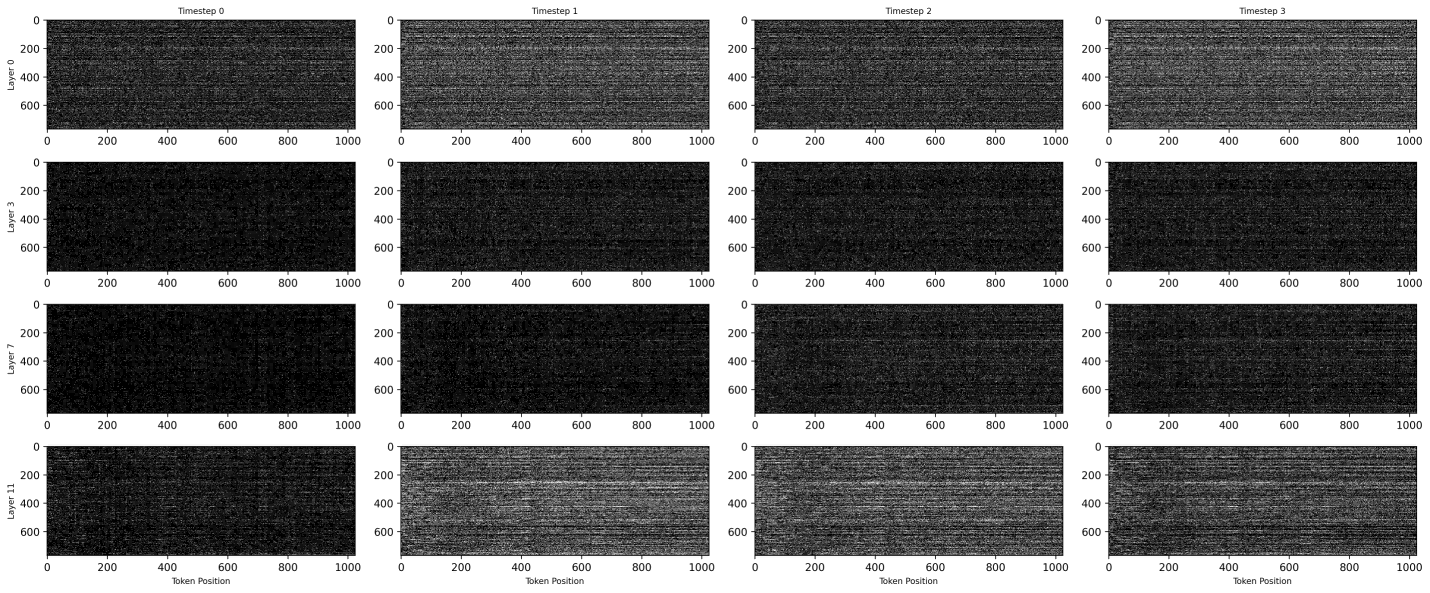}
    \caption{With four time steps, the model exhibits more pronounced spatiotemporal differentiation in firing behavior. Activity becomes more variable across token positions, and certain regions in deeper layers start to display concentrated firing. This suggests that the network is engaging in increasingly specialized processing, distributing its computation more selectively based on both input semantics and accumulated temporal evidence.}
    \label{fig:firing_4}
\end{figure}

These token-level visualizations provide a more fine-grained view of the internal computational dynamics of \textbf{BiSpikCLM}. At early inference stages (e.g., $T=2$), spiking activity remains relatively sparse and diffuse across token positions, particularly in intermediate layers, reflecting an initial encoding phase with limited temporal context. As additional time steps are introduced ($T=3$ and $T=4$), firing patterns become increasingly structured, with deeper layers exhibiting emerging selectivity over specific token regions, suggesting progressive temporal refinement and hierarchical processing. When the temporal budget is sufficiently large ($T=8$), deeper layers display pronounced and focused activation patterns, indicating refined internal representations and enhanced contextual integration.

Overall, these results demonstrate that \textbf{BiSpikCLM} dynamically allocates its computational resources across both layers and token positions as inference time increases, transitioning from broad and shallow activation to more selective and semantically rich processing. This behavior highlights the model’s capacity for temporally adaptive computation and underscores the suitability of spiking neural architectures for efficient and scalable language modeling.

\begin{figure}[H]
    \centering
    \includegraphics[width=0.8\textwidth]{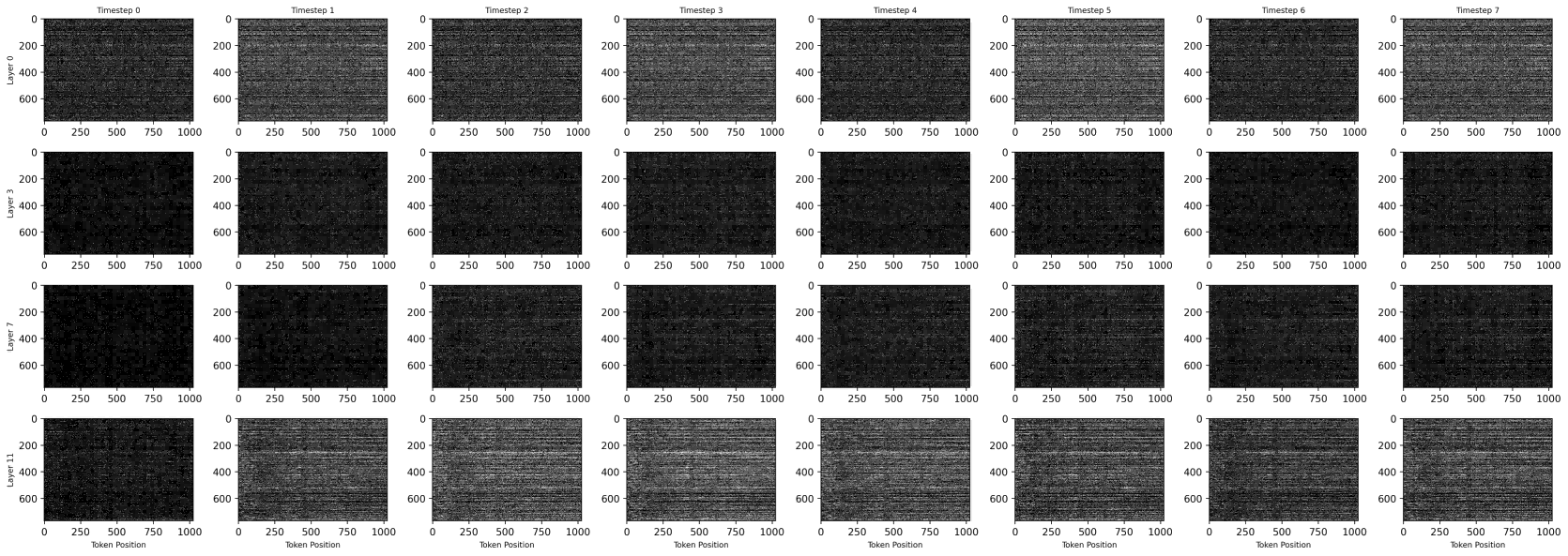}
    \caption{By $T=8$, the firing patterns exhibit substantial temporal evolution and structural complexity. Deeper layers, in particular, show heightened and more focused activation for specific token regions, reflecting refined internal representations. This level of activity suggests that the model has transitioned into a more stable and semantically rich encoding phase. The marked increase in firing diversity and intensity across layers highlights the model's capacity to utilize extended temporal windows for deeper contextual integration and task-specific computation.}
    \label{fig:firing_8}
\end{figure}

\section{Extending SFSA to Llama Architecture}

To demonstrate the versatility of our proposed SFSA (Softmax-Free Spiking Attention) mechanism, we extend its application beyond the original OPT-based BiSpikCLM architecture. Specifically, we implement SFSA on the Llama architecture~\citep{touvron2023llama}, training two new models: BiSpikCLM-Llama with 165M and 1.2B parameters. This expansion validates that SFSA is designed fundamentally for spike-based language modeling, independent of specific architectural choices.

\begin{table}[h]
\centering
\footnotesize
\setlength{\tabcolsep}{2.45pt} 
\renewcommand{\arraystretch}{1.1} 
\caption{Performance comparison of BiSpikCLM models across different architectures and time steps.}
\begin{tabular}{l|cc|ccccccccc}
\toprule
\multirow{2}{*}{\textbf{Model}} & 
\multirow{2}{*}{\textbf{\makecell{Params \\ (B)}}} & 
\multirow{2}{*}{\textbf{\makecell{Time \\ Step}}} & 
\multicolumn{9}{c}{\textbf{Zero-shot Accuracy (\%) $\uparrow$}} \\
& & & \textbf{ARC-e} & \textbf{ARC-c} & \textbf{WG} & \textbf{BQ} & \textbf{PIQA} & \textbf{HS} & \textbf{OBQA} & \textbf{HQA} & \textbf{Avg.} \\
\midrule
BiSpikCLM-OPT & 0.125 & 2 & 39.1 & 18.9 & 50.3 & 52.7 & 56.7 & 28.1 & 19.8 & 22.9 & 36.05 \\
BiSpikCLM-OPT & 0.125 & 4 & 39.4 & 19.0 & 51.2 & 53.0 & 57.5 & 29.2 & 19.7 & 23.1 & 36.50 \\
BiSpikCLM-Llama & 0.165 & 2 & 39.3 & 19.2 & 51.3 & 52.8 & 56.8 & 28.3 & 20.3 & 23.2 & 36.40 \\
BiSpikCLM-Llama & 0.165 & 4 & 39.6 & 19.6 & 51.7 & 53.2 & 57.3 & 29.5 & 20.7 & 23.7 & 36.91 \\
\midrule
BiSpikCLM-OPT & 1.300 & 2 & 45.7 & 23.5 & 54.2 & 56.3 & 62.3 & 40.2 & 24.5 & 24.0 & 41.33 \\
BiSpikCLM-OPT & 1.300 & 4 & 46.3 & 24.3 & 55.6 & 56.8 & 63.4 & 41.7 & 25.2 & 24.3 & 42.19 \\
BiSpikCLM-Llama & 1.200 & 2 & 45.8 & 23.9 & 54.7 & 56.1 & 63.0 & 40.6 & 24.7 & 24.1 & 41.61 \\
BiSpikCLM-Llama & 1.200 & 4 & 46.5 & 24.4 & 55.9 & 56.6 & 63.6 & 41.8 & 25.4 & 24.4 & 42.33 \\
\bottomrule
\end{tabular}
\end{table}

The results show consistent performance improvements when increasing time steps from T=2 to T=4 across all model variants. Notably, the Llama-based models achieve comparable or slightly better results than their OPT-based counterparts, particularly in the 1.2B parameter range where BiSpikCLM-Llama (T=4) reaches an average accuracy of 42.33\%. This demonstrates that SFSA effectively captures spiking dynamics across different transformer architectures while maintaining competitive language modeling capabilities.

\section{Comparison with Quantized ANNs}

To contextualize our contributions, it is crucial to distinguish Spiking Neural Networks (SNNs) from quantized Artificial Neural Networks (ANNs), a common point of comparison. Quantized ANNs achieve efficiency through \textbf{spatial discretization}, converting continuous floating-point weights or activations into low-bit, fixed-point formats (e.g., 2-bit, 4-bit). While this approach facilitates model compression and acceleration, the underlying computation remains fundamentally dependent on dense Multiply-Accumulate (MAC) operations. Although 1-bit networks eliminate MACs, they are notoriously difficult to train effectively.

In stark contrast, SNNs leverage \textbf{temporal sparse encoding} via binary spikes. A spike (1) or its absence (0) at a given time step encodes information, enabling event-driven and asynchronous computation. This paradigm allows for substantial energy savings on neuromorphic hardware by exploiting sparsity, a benefit difficult for quantized ANNs to replicate~\citep{horowitz2014energy}. Our work harnesses this inherent property of SNNs to significantly reduce inference energy while preserving language generation capabilities.

We provide a direct performance comparison between our 1.3B/1.2B BiSpikCLM models and state-of-the-art quantization methods in Table~\ref{tab:quant_comparison}. While quantized ANNs like \citet{shao2023omniquant}, \citet{wang2023bitnet}, and \citet{kaushal2024spectra} may exhibit a marginal edge in accuracy, we emphasize that these represent fundamentally different methodological paradigms.

Therefore, the primary contribution of our work is not to surpass quantization methods in accuracy, but to pioneer and validate a new, energy-efficient pathway for large language models. Our significance is threefold: (1) We introduce the first train-from-scratch binary-spike-based LLM; (2) We propose the Softmax-Free Spiking Attention (SFSA) mechanism to enable effective causal modeling in SNNs; and (3) We demonstrate the fundamental feasibility of SNN-LLMs, filling a critical gap in the field and establishing a foundation for future research into scalable, energy-conscious language models.

\begin{table}[h]
\centering
\footnotesize
\setlength{\tabcolsep}{2.45pt} 
\renewcommand{\arraystretch}{1.1} 
\caption{Performance comparison of BiSpikCLM with quantized LLMs.}
\label{tab:quant_comparison}
\begin{tabular}{l|c|ccccccccc}
\toprule
\multirow{2}{*}{\textbf{Model}} & 
\multirow{2}{*}{\textbf{\makecell{Params \\ (B)}}} & 
\multicolumn{9}{c}{\textbf{Zero-shot Accuracy (\%) $\uparrow$}} \\
& & \textbf{ARC-e} & \textbf{ARC-c} & \textbf{WG} & \textbf{BQ} & \textbf{PIQA} & \textbf{HS} & \textbf{OBQA} & \textbf{HQA} & \textbf{Avg.} \\
\midrule
BiSpikCLM (T=2) & 1.3 & 45.7 & 23.5 & 54.2 & 56.3 & 62.3 & 40.2 & 24.5 & 24.0 & 41.33 \\
BiSpikCLM (T=4) & 1.3 & 46.3 & 24.3 & 55.6 & 56.8 & 63.4 & 41.7 & 25.2 & 24.3 & 42.19 \\
BiSpikCLM-Llama (T=2) & 1.2 & 45.8 & 23.9 & 54.7 & 56.1 & 63.0 & 40.6 & 24.7 & 24.1 & 41.61 \\
BiSpikCLM-Llama (T=4) & 1.2 & 46.5 & 24.4 & 55.9 & 56.6 & 63.6 & 41.8 & 25.4 & 24.4 & 42.33 \\
\midrule
BitNet (1.58-bit) & 1.3 & 48.7 & 24.1 & 56.8 & 57.4 & 64.2 & 40.6 & 24.6 & 24.9 & 42.66 \\
SmoothQuant (W4A4) & 1.3 & 44.3 & 23.5 & 54.2 & 55.6 & 62.3 & 40.0 & 23.9 & 23.2 & 40.88 \\
OmniQuant (W4A4) & 1.3 & 49.7 & 25.5 & 58.1 & 58.3 & 66.0 & 41.7 & 26.2 & 25.7 & 43.90 \\
TriLM (1.58-bit) & 1.1 & 46.2 & 23.9 & 55.5 & 56.5 & 64.1 & 39.2 & 24.7 & 24.5 & 41.83 \\
TriLM (1.58-bit) & 1.5 & 49.1 & 25.2 & 57.3 & 57.2 & 64.5 & 41.1 & 25.6 & 25.3 & 43.16 \\
\bottomrule
\end{tabular}
\end{table}

\section{Effect of Training Scale and Conversational Ability}

To validate the scalability of our BiSpikCLM, we conducted experiments to assess the impact of increased training data. Our initial 1B and 10B token training runs were conducted under constrained GPU resources, primarily serving as a proof-of-concept. To further probe the potential of our model, we scaled the training for the 125M and 1.3B models to 5B and 25B tokens, respectively. While this scale is still modest compared to standard pre-training regimens, the use of knowledge distillation from a teacher model allows the student BiSpikCLM to learn more effectively, mitigating the extensive data requirements typically associated with training from scratch.

\begin{table}[h]
\centering
\footnotesize
\setlength{\tabcolsep}{3pt} 
\renewcommand{\arraystretch}{1.1} 
\caption{Performance of BiSpikCLM with varying training token counts.}
\label{tab:token_scaling}
\begin{tabular}{l|cc|ccccccccc}
\toprule
\multirow{2}{*}{\textbf{Model}} & 
\multirow{2}{*}{\textbf{\makecell{Params \\ (B)}}} & 
\multirow{2}{*}{\textbf{\makecell{Tokens \\ (B)}}} & 
\multicolumn{9}{c}{\textbf{Zero-shot Accuracy (\%) $\uparrow$}} \\
& & & \textbf{ARC-e} & \textbf{ARC-c} & \textbf{WG} & \textbf{BQ} & \textbf{PIQA} & \textbf{HS} & \textbf{OBQA} & \textbf{HQA} & \textbf{Avg.} \\
\midrule
BiSpikCLM (T=2) & 0.125 & 1.0 & 39.1 & 18.9 & 50.3 & 52.7 & 56.7 & 28.1 & 19.8 & 22.9 & 36.05 \\
BiSpikCLM (T=2) & 0.125 & 5.0 & 41.4 & 19.2 & 51.4 & 53.4 & 58.2 & 30.2 & 19.9 & 23.1 & 37.10 \\
\midrule
BiSpikCLM (T=2) & 1.300 & 10.0 & 45.7 & 23.5 & 54.2 & 56.3 & 62.3 & 40.2 & 24.5 & 24.0 & 41.33 \\
BiSpikCLM (T=2) & 1.300 & 25.0 & 48.3 & 26.4 & 57.8 & 58.6 & 65.1 & 44.9 & 27.3 & 26.7 & 44.39 \\
\bottomrule
\end{tabular}
\end{table}

As shown in Table~\ref{tab:token_scaling}, increasing the training tokens yields significant performance gains. The average accuracy of the 125M model improved by 1.05\% when trained on 5B tokens compared to 1B tokens. More impressively, the 1.3B model's average accuracy increased by 3.06\% when scaling from 10B to 25B tokens. These results demonstrate that BiSpikCLM is not merely a small-scale proof-of-concept but a model architecture that responds positively and effectively to increased training data, suggesting strong potential for further scaling.

To further investigate the training dynamics, we plot the training loss curves for our models. Figure~\ref{fig:loss_curve} illustrates the loss progression for the 1.3B model trained on both 10B and 25B tokens. The curves exhibit a smooth and consistent downward trend, indicating that our training process is stable and converges effectively. Notably, the model trained on 25B tokens continues to decrease its loss to a lower final value, corroborating the quantitative performance gains observed in Table~\ref{tab:token_scaling}. This stable convergence behavior across different training scales demonstrates the robustness of our proposed BiSpikCLM architecture and the effectiveness of the SFSA mechanism in facilitating the optimization of spike-based language models.

\begin{figure}[h]
\centering
\includegraphics[width=0.9\textwidth]{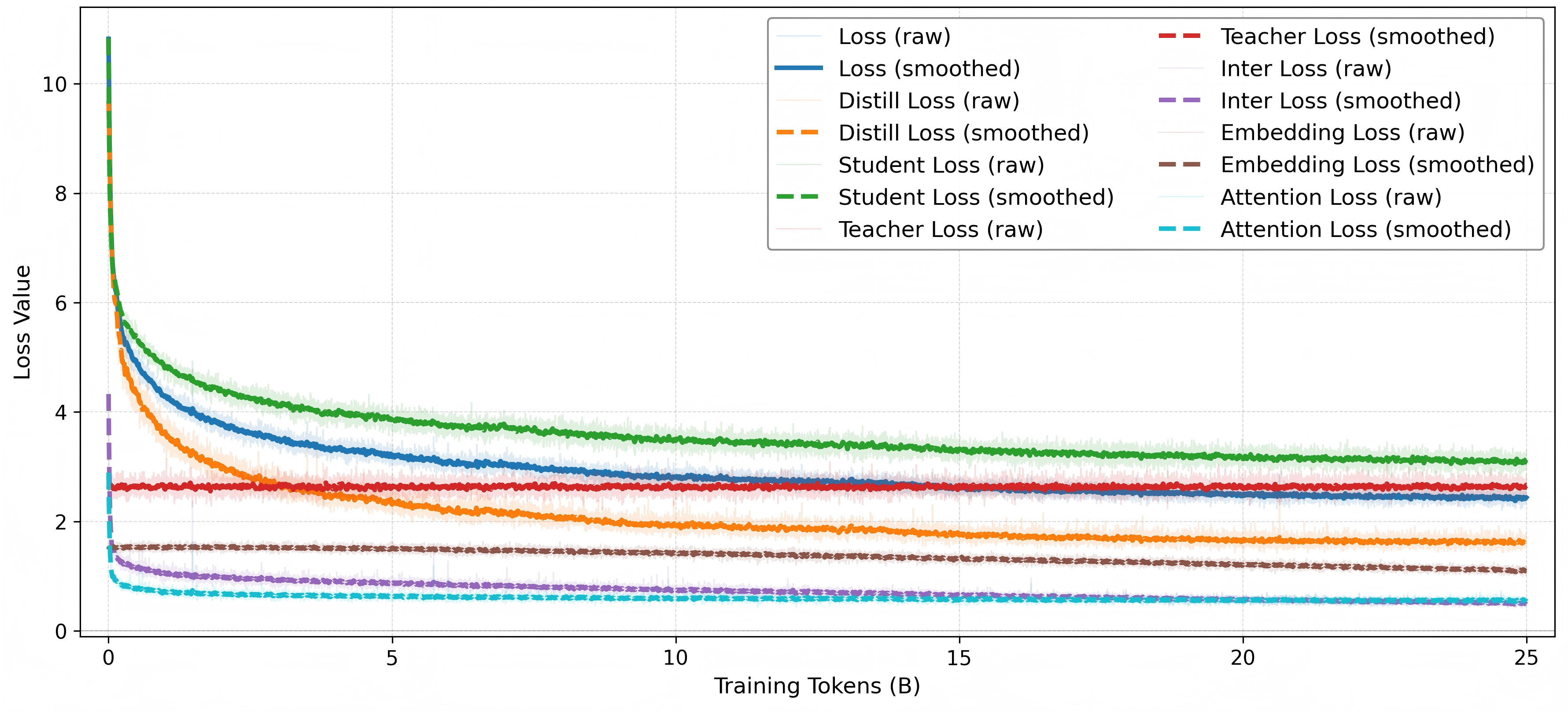}
\caption{Training loss curves for the BiSpikCLM-1.3B model, comparing the 10B and 25B token training regimes. The smooth downward trend confirms stable convergence.}
\label{fig:loss_curve}
\end{figure}

Beyond quantitative metrics, we qualitatively evaluated the conversational abilities of our BiSpikCLM-1.3B model trained with 10B and 25B tokens. Table~\ref{tab:conversation_cases} presents sample responses to a set of prompts. The 10B model exhibits basic conversational skills and can maintain a simple dialogue. However, it fails to answer factual questions correctly, likely due to a lack of exposure to that specific knowledge during its limited training. In contrast, the 25B model provides more accurate and detailed responses, correctly answering the factual question about the U.S. capital and generating more coherent and contextually relevant text. This qualitative improvement aligns with the quantitative gains, underscoring the importance of training scale for developing the factual knowledge and reasoning capabilities of spiking language models.

\begin{table}[h]
\centering
\footnotesize
\caption{Comparison of conversational abilities for BiSpikCLM-1.3B trained on different token counts.}
\label{tab:conversation_cases}
\begin{tabular}{p{0.2\textwidth}p{0.35\textwidth}p{0.35\textwidth}}
\toprule
\textbf{Prompt} & \textbf{BiSpikCLM (10B Tokens)} & \textbf{BiSpikCLM (25B Tokens)} \\
\midrule
\textit{Hi there! How are you doing today?} & 
Hi! How are you? I'm doing well. How about you? & 
I'm doing great! I'm just tired as hell. I went to bed at 11pm last night, and now I'm up at 4am. I have a class tomorrow morning at 7:30, so I have to get up and go to class. \\
\addlinespace
\midrule
\textit{Where is the capital of the United States?} & 
It's in the middle of the country. & 
Washington, D.C. \\
\addlinespace
\midrule
\textit{How about the cat?} & 
I think it's a good idea to have a cat in the house. & 
I have a cat and he doesn't like the camera. I'll send you a picture when he is in the mood. \\
\bottomrule
\end{tabular}
\end{table}

\section{Analysis of Long-Context Dependency and Sparsity}

In the Softmax-Free Spiking Attention (SFSA) mechanism, information propagation relies on spike-triggered updates of the attention state. Consequently, a higher sparsity level—achieved by increasing the firing threshold or reducing the number of time steps—can theoretically weaken the transmission of long-range signals, potentially impairing the model's ability to model long-distance dependencies.

To empirically evaluate this, we conducted a sliding-window evaluation on the WikiText-2 dataset. We compared our BiSpikCLM-1.3B (T=4) and BiSpikCLM-Llama-1.2B (T=4) models with varying firing thresholds against the full-precision OPT-1.3B and Llama-3.2-1.2B baselines. The models were tested with context lengths of 512, 1024, 2048, 4096, and 8192 tokens, and the performance was measured using Perplexity (PPL). For reference, the maximum supported context length for OPT-1.3B is 2048. The results are presented in Table~\ref{tab:long_context_ppl}.

\begin{table}[h]
\centering
\footnotesize
\setlength{\tabcolsep}{3pt}
\renewcommand{\arraystretch}{1.1}
\caption{Perplexity (PPL) on WikiText-2 with varying context lengths and firing thresholds. The firing rate (sparsity) is shown in parentheses.}
\label{tab:long_context_ppl}
\begin{tabular}{l|cc|ccccc}
\toprule
\multirow{2}{*}{\textbf{Model}} & 
\multirow{2}{*}{\textbf{\makecell{Params \\ (B)}}} & 
\multirow{2}{*}{\textbf{\makecell{Firing \\ Threshold}}} & 
\multicolumn{5}{c}{\textbf{Context Length}} \\
& & & \textbf{512} & \textbf{1024} & \textbf{2048} & \textbf{4096} & \textbf{8192} \\
\midrule
OPT & 1.300 & - & 16.26 & 13.58 & 11.13 & - & - \\
Llama-3.2 & 1.200 & - & 12.93 & 10.96 & 9.76 & 9.02 & 8.54 \\
\midrule
\multirow{3}{*}{BiSpikCLM} 
& \multirow{3}{*}{1.300} & 0.70 & 36.72 (0.184) & 32.49 (0.183) & 29.34 (0.181) & - & - \\
& & 0.85 & 39.75 (0.177) & 36.30 (0.175) & 32.18 (0.174) & - & - \\
& & 1.00 & 43.17 (0.163) & 39.88 (0.162) & 34.62 (0.160) & - & - \\
\midrule
\multirow{3}{*}{BiSpikCLM-Llama} 
& \multirow{3}{*}{1.200} & 0.70 & 33.53 (0.198) & 29.82 (0.195) & 27.59 (0.194) & 25.32 (0.192) & 23.77 (0.190) \\
& & 0.85 & 37.14 (0.186) & 33.37 (0.185) & 31.26 (0.185) & 28.89 (0.183) & 25.42 (0.181) \\
& & 1.00 & 40.88 (0.174) & 37.25 (0.172) & 34.08 (0.171) & 30.76 (0.171) & 27.19 (0.168) \\
\bottomrule
\end{tabular}
\end{table}

The results lead to several key observations. First, as the sparsity increases (i.e., the threshold rises from 0.70 to 1.00), the PPL performance degrades across all context lengths. This confirms our intuition that higher sparsity can impede the flow of information. However, the magnitude of this degradation is moderate, suggesting that increasing sparsity within a certain range does not cause a catastrophic collapse in long-context dependency handling.

And more intriguingly, for a fixed threshold, the PPL consistently decreases as the context length increases. For instance, the BiSpikCLM-Llama model with a threshold of 0.70 improves from a PPL of 33.53 at 512 tokens to 23.77 at 8192 tokens. This indicates that the natural increase in sparsity caused by longer sequences has a negligible negative impact on the model's capability. The model retains its ability to extract global information from longer contexts.

Finally, we acknowledge that the current design of SFSA lacks specialized mechanisms for handling extremely long contexts. We are actively working to address this limitation through future research directions, such as adaptive spike scheduling and the integration of long-term memory neurons. We believe these are engineering frontier challenges rather than fundamental obstacles and can be progressively overcome in future work.


\end{document}